\newcommand{\set}[1]{\{#1\}}
\newcommand{\setof}[2]{\{#1\,|\:#2\}}
\newcommand{\card}[1]{|#1|}
\newcommand{\tuple}[1]{\langle #1 \rangle}
\newcommand{\inv}[1]{#1^{-1}}
\newcommand{\eF}[1]{\mathbb{F}_{#1}}
\newcommand{\Sym}[1]{\mathrm{Sym}(#1)}
\newcommand{\gen}[1]{\lceil #1 \rceil}
\newcommand{\orbp}[1]{\mathcal{OP}(#1)}
\newcommand{\orbpG}{\mathcal{P}}
\newcommand{\QP}{\mathcal{Q}}
\newcommand{\finer}{\sqsubseteq}
\newcommand{\merge}{\sqcup}
\newcommand{\botp}[1]{\bot_{#1}}
\newcommand{\topp}[1]{\top\!_{#1}}
\newcommand{\cstr}{\mathcal{C}}
\newcommand{\cstrat}[1]{\cstr(#1)}
\newcommand{\affbij}[1]{\phi_{#1}}
\newcommand{\affbijof}[2]{\affbij{#1}(#2)}
\newcommand{\matrvar}[1]{\mathcal{M}_{#1}}
\newcommand{\cstrs}[1]{\mathcal{C}_{#1}}
\newcommand{\cstrsat}[2]{\cstrs{#1}(#2)}
\newcommand{\cstrps}[1]{\mathcal{C'}_{#1}}
\newcommand{\cstrpsat}[2]{\cstrps{#1}(#2)}
\newcommand{\supspc}{F}
\newcommand{\supbasis}{\mathbf{f}}
\newcommand{\basevec}[1]{f_{#1}}
\newcommand{\cstrspc}{E}
\newcommand{\buildbasisfn}{\mathrm{B}}
\newcommand{\buildbasis}[3]{\buildbasisfn(#1,\,#2,\,#3)}
\newcommand{\restr}[2]{#1{\scriptstyle |}_{#2}}
\newcommand{\kron}[1]{\mathrm{\delta}_{#1}}
\newcommand{\colmat}[2]{#1_{#2}}
\newcommand{\transpose}[1]{{^t}#1}
\newenvironment{example}{\vspace{1ex}\noindent\emph{Example.}}{\hspace{0.5em}$\Diamond$\vspace{1ex}}
\newtheorem{definition}{Definition}
\newtheorem{lemma}{Lemma}
\newtheorem{theorem}[lemma]{Theorem}
\newtheorem{corollary}[lemma]{Corollary}
\newenvironment{proof}{\vspace{.1ex}\noindent\emph{Proof.}}{\hspace{\fill}$\Diamond$\vspace{.5ex}\newline}
\title{Solving Linear Constraints in Elementary Abelian $p$-Groups of Symmetries}
\author{Thierry Boy de la Tour \& Mnacho Echenim\\
\small Laboratoire d'Informatique de Grenoble, CNRS - Grenoble INP\\
\small    B\^{a}timent IMAG C - 220 rue de la Chimie\\
\small    F-38400 Saint-Martin-d'H\`{e}res\\
\small email: \texttt{thierry.boy-de-la-tour@imag.fr}, \texttt{mnacho.echenim@imag.fr}}
\date{}
\begin{document}

\maketitle

\begin{abstract}
Symmetries occur naturally in CSP or SAT problems and are not very
difficult to discover, but using them to prune the search space tends
to be very challenging. Indeed, this usually requires finding specific
elements in a group of symmetries that can be huge, and the problem of
their very existence is NP-hard. We formulate such an existence
problem as a constraint problem on one variable (the symmetry to be
used) ranging over a group, and try to find restrictions that may be
solved in polynomial time. By considering a simple form of constraints
(restricted by a cardinality $k$) and the class of groups that have
the structure of $\eF{p}$-vector spaces, we propose a partial
algorithm based on linear algebra. This polynomial algorithm always
applies when $k=p=2$, but may fail otherwise as we prove the problem
to be NP-hard for all other values of $k$ and $p$. Experiments show
that this approach though restricted should allow for an efficient use
of at least some groups of symmetries.
We conclude with a few directions to be explored to efficiently
solve this problem on the general case.

\end{abstract}
keywords: symmetries, linear algebra, complexity

\section{Introduction}\label{sec-intro}

Symmetries are permutations of input symbols that, when applied to an
instance of a computational problem, leave its solution
invariant. Since na\"{\i}ve algorithms reproduce these symmetries in
their search space, it is tempting to use them as a pruning device.  A
typical example is the pigeon-hole problem in propositional logic: its
symmetries show that any pigeon can be swapped with any other (and
similarly for the holes), and therefore play equivalent r\^{o}les.

Since invariance is stable by composition, the set of
symmetries of an instance forms a permutation group. This means that
the information they provide has a mathematical structure that should
yield nice computing properties (even if they occur only at the
meta-level). In fact, some techniques from computational group theory
have been employed to discover symmetries and to use them.

However, it seems that algorithms can use symmetries in a straightforward
way only if their search space preserves the structure of the group of
symmetries in some sense (subtrees should somehow correspond to
subgroups). This is usually not the case for the most efficient algorithms
developed in the field of AI. Hence many different methods have been
developed in order to prune the search space with symmetries, either by
designing special algorithms or by modifying instances through
symmetry-breaking. Much work has been devoted to this subject, see
e.g. \cite{Walsh10} and the references therein.

One feature common to these methods is that they ideally assume the
ability to produce symmetries that have particular properties,
suitable for the pruning scheme. But this problem also happens to be
NP-hard, which explains why symmetries that do not result in the best
pruning may be used. The time spent on searching and using symmetries
does not always pay off. This suggests that the group structure may
not be sufficient to induce enough computational properties to ensure
efficient pruning.

Our aim in this paper is to investigate ways of finding suitable symmetries
in polynomial time. To this purpose we first formulate this search problem
by a language of constraints as simple as possible. This is the topic of
Section \ref{sec-cstr}. We then consider restrictions of this search
problem that confer deeper mathematical structure to the groups. The idea
is to transform the constraints into \emph{linear equations}, which would
then be easy to solve by means of basic computer algebra. To the best of
our knowledge, although restrictions to vector spaces have already been
considered in the literature, such a transformation represents a novel
approach. This means that we need to assume that the groups are also vector
spaces, and to develop ways of efficiently working with symmetries as
vectors (i.e., essentially of computing their coordinates in a suitable
basis). This is developed in Sections \ref{sec-algbr} and
\ref{sec-solvlincstr}, leading to a polynomial algorithm that solves
so-called linear constraints. We will then see in Section \ref{sec-NPC}
that, even with the simple constraints and vector spaces, the search
problem remains NP-hard in most cases. Experiments in Section
\ref{sec-expe} illustrate the efficiency of this polynomial algorithm on
random samples of linear constraints, compared to a general purpose
algorithm.  We suggest in the conclusion a few directions for
using this approach in a wider setting.

\section{Definitions}\label{sec-def}

We do not recall the most basic definitions and notations from group
theory or linear algebra, such as cycles or bases, which can be found
in standard textbooks, e.g. \cite{mDUM99a}, except in order to settle
notations.

Given a finite set $A$, we denote by $\Sym{A}$ the group of
permutations of $A$. If $g_1,\ldots,g_m$ are permutations of $A$, we
denote by $\gen{g_1,\ldots,g_m}$ the subgroup of $\Sym{A}$ generated
by these permutations. For $a\in A$ and $g,g'\in\Sym{A}$, the image of
$a$ by $g$ is denoted by $a^g$, and the composition of permutations
$g'\circ g$ by $gg'$, so that $a^{gg'} = (a^g)^{g'}$. From a
computational point of view, it is obvious that the product $gg'$ can
be performed in time linear in $\card{A}$. The \emph{order} of $g$ is
the smallest positive integer $n$ such that $g^n$ is the identity.

Let $G$ be a permutation group on $A$, the \emph{orbit} of $a$ in $G$
(or $G$-orbit of $a$) is $a^G = \setof{a^g}{g\in G}$. The set of
$G$-orbits forms a partition of $A$, denoted by $\orbp{G}$. The group
$G$ is \emph{transitive} if it has only one orbit (i.e., $\orbp{G} =
\set{A}$). It is easy to see that $\orbp{\gen{g}}$ can be obtained
from the cycles of $g$, e.g. if $A=\set{1,\ldots,6}$ then
$\orbp{\gen{(1\ 2)(3\ 4\ 5)}} = \set{\set{1,2},\set{3,4,5},\set{6}}$.

The refinement order on partitions of $A$ ($P\finer
P'$ iff $\forall O\in P, \exists O'\in P'$ s.t. $O\subseteq O'$) is a
complete lattice; the least upper bound $P\merge P'$ is obtained by merging
the non-disjoint elements of $P$ and $P'$. The smallest partition is
$\botp{A} = \setof{\set{a}}{a\in A}$ and the greatest is $\topp{A} =
\set{A}$. Given two permutation groups $G$ and $G'$ on $A$,
$\orbp{\gen{G\cup G'}} = \orbp{G}\merge\orbp{G'}$ (see
\cite[chap. 7]{GB}). Hence, starting with $m$ generators $g_1,\ldots,g_m$
the orbit partition $\orbp{\gen{g_1,\ldots,g_m}} = \bigsqcup_{i=1}^m
\orbp{\gen{g_i}}$ can be computed in time polynomial in $m$ and $\card{A}$.

A group $G$ is an \emph{elementary Abelian} $p$-group if it is Abelian
and its non-trivial elements have order $p$, a prime number.  It is
simple to test this property on the generators of a group: $G$ is an
elementary Abelian $p$-group iff its generators commute and have order
$p$. If this is the case we adopt the additive notation,
e.g. $(a\ b)+(c\ d) = (a\ b)(c\ d)$, $2(a\ b\ c) = (a\ b\ c)^2 =
(a\ c\ b)$ and 0 is the identity. By considering multiplication by an
integer as an external product on the set of integers modulo $p$, we
confer to $G$ the structure of an $\eF{p}$-vector space. Conversely,
every $\eF{p}$-vector space is isomorphic to an elementary Abelian
permutation $p$-group.

Since the class of elementary Abelian $p$-groups is closed under
homomorphic images and for all $O\in\orbp{G}$ the restriction to $O$
operator is a morphism from $G$ to $\Sym{O}$, if $G$ is an $\eF{p}$-vector
space then so is its image $\restr{G}{O} = \setof{\restr{g}{O}}{g\in G}$
(although it may not be a subgroup of $G$).
The groups $\restr{G}{O}$ are the \emph{transitive constituents} of $G$.

In the sequel, unless stated otherwise, $a,b,c$ denote members of
$A$, $u,v,w$ denote vectors (i.e., permutations on $A$), $\mathbf{h}$
and $\supbasis$ denote bases of vector spaces, and the $x_i$'s
coordinates in vector spaces (i.e., integers modulo $p$). 

\section{Constraints on symmetries}\label{sec-cstr}

As mentioned above, techniques designed to compute generators for a group
of symmetries are well-known. Basically, they consist in building a graph
with the elements to be permuted as vertices, and with edges and labels
that depend on the instance, so that its automorphism group is the expected
group of symmetries. Building this graph is usually straightforward and its
automorphism group can then be computed with e.g. the well-known program
\texttt{nauty} \cite{McKay90}. Though not polynomial, this algorithm has a
low average complexity and is very efficient.

We thus assume given a group of symmetries that is specified by a
generating set of $m$ permutations $g_1,\ldots,g_m$ of a set $A$. The group
$G=\gen{g_1,\ldots,g_m}$ is the set in which a permutation g satisfying a
given constraint is searched for. In this section we define a language for
expressing constraints on $g$ that is both useful and simple. We begin with
an example.

We consider the problem \textsc{mmc} from \cite{CrawfordGLR96}. Given
a model $M$ of a formula built on propositional variables which are
linearly ordered (say, $x<y<z$), $^gM$ being the interpretation
defined by $^gM(a) = M(a^g)$, and given a group $G$ by its generators,
the problem \textsc{mmc} consists in deciding whether there exists a
$g\in G$ such that $^gM<M$ (interpretations are ordered
lexicographically). This problem 
is required for computing symmetry-breaking predicates, and
is shown in \cite{CrawfordGLR96} to
be NP-complete. We can translate $^gM<M$ as: $M(x^g)M(y^g)M(z^g)
< M(x)M(y)M(z)$, and then as
\[\begin{array}{ll}
M(x^g)<M(x)\vee &\\
M(x^g)=M(x)\wedge[ & M(y^g)<M(y) \vee\\
& M(y^g)=M(y)\wedge M(z^g)<M(z)].
\end{array}\]
A translation into disjunctive normal form yields:
\[\begin{array}{ll}
& M(x^g)<M(x) \\
\vee & M(x^g)=M(x)\wedge M(y^g)<M(y) \\
\vee & M(x^g)=M(x)\wedge M(y^g)=M(y) \wedge M(z^g)<M(z).
\end{array}\]
If $X$ denotes $\setof{a}{M(a)<M(x)}$ then the first disjunct
translates into $x^g\in X$. Similarly, let $X'=\setof{a}{M(a)=M(x)}$
and $Y=\setof{a}{M(a)<M(y)}$, the second disjunct is $x^g\in X'\wedge
y^g\in Y$, etc. Thus the problem \textsc{mmc} can be expressed by
means of boolean combinations of \emph{atomic constraints} of the form
$x^{\sigma}\in X$, where $\sigma$ is the (unique) variable ranging in
$G$. Note that the negation of any atomic constraint can be expressed
as an atomic constraint
\footnote{This is no longer true if
  we break down $x^g\in\set{x_1,\ldots,x_n}$ into
  $x^g=x_1\vee\ldots\vee x^g=x_n$, hence atomic constraints of the
  form $x^g=y$ would not necessarily be simpler to handle.}
(with the
complement set), hence negation can be ruled out from the
language. Since the set of solutions of a
disjunction is the union of the solutions of each disjunct, we
focus on solving conjunctions of atomic constraints.
\begin{definition}
We address the computational problem \textsc{gc}, for Group
Constraint, that takes as input a set $A$ of cardinality $n$,
permutations $g_1,\ldots,g_m$ of $A$, and a constraint $\varphi$ which
is a conjunction of atomic formulas of the form $x^{\sigma}\in X$,
where $\sigma$ is a unique variable and $x\in A, X\subseteq A$. The
\emph{decision problem} $\textsc{gc}$ consist in checking for the
existence of a permutation $g\in \gen{g_1,\ldots,g_m}$ that satisfies
all the conjuncts in $\varphi$ (a conjunct $x^\sigma\in X$ is
\emph{satisfied} by $g$ if $x^g\in X$). The associated \emph{search
  problem} is the computation of such a $g$ if it exists.
\end{definition}

A number of transformations on the constraint that preserve
the set of solutions can be applied. These are:
\begin{eqnarray*}
x^{\sigma}\in X \wedge x^{\sigma}\in Y & \longrightarrow & x^{\sigma}\in X\cap Y, \\
\varphi & \longrightarrow & \varphi\ \wedge\  x^{\sigma}\in x^G.
\end{eqnarray*}
The correctness of the first transformation is obvious and that of the
second trivial since $x^g\in x^G$ holds for all $g\in G$. By applying
these transformations systematically (together with
associativity-commutativity of conjunction), assuming that
$A=\set{a_1,\ldots,a_n}$ it is always possible to transform any given
constraint into an equivalent constraint of the form $a_1^{\sigma}\in
A_1 \wedge \ldots \wedge a_n^{\sigma}\in A_n$, where $A_i\subseteq
a_i^G$ for all $1\leq i\leq n$. Computing this normal form is linear
in the length of $\varphi$ and $\card{A}$.

This normal form may be represented as a function $\cstr$ from $A$ to
$2^A$, where $\cstrat{a_i} = A_i$. The problem \textsc{gc} then
consists in finding a $g\in G$ such that $\forall a\in A, a^g\in
\cstrat{a}$, and we may consider $\cstr$ directly as an input of the
problem, so that an instance of \textsc{gc} is a tuple $\tuple{A,
g_1,\ldots, g_m, \cstr}$ (or $\tuple{A,G,\cstr}$ in
short). We say that $\cstr$ is a \emph{$k$-constraint} if $\forall
a\in A, \card{\cstrat{a}}\leq k$.

As mentioned above, in the sequel we only consider, for every prime
$p$, the restriction of \textsc{gc} to the instances where $G =
\gen{g_1,\ldots,g_m}$ is an elementary Abelian $p$-group, i.e., an
$\eF{p}$-vector space\footnote{Note that solving such constraints is
  much simpler than computing any lex-leader formula as in
  \cite{LuksR04}, where restrictions to vector spaces are already
  considered. Our constraints are therefore not relevant to the
  problem of building lex-leader formulas, and are not meant to be
  used in the symmetry-breaking scheme of \cite{CrawfordGLR96}.}.

\section{From permutations to linear algebra}\label{sec-algbr}

Before giving the technical details of the transformation from a
problem on permutations to a problem in linear algebra, we briefly
summarize the way we proceed.  

Since our approach is based on the transitive constituents
$\restr{G}{O}$ of $G$ (for $O\in\orbp{G}$), we first define a vector
space $\supspc$ that contains as subspaces both the group $G$ and the
$\restr{G}{O}$'s. We then prove a fundamental property of the
$\restr{G}{O}$'s and use it to realize a polynomial test of linear
dependence restricted to these subspaces. This is used to compute a
basis of each $\restr{G}{O}$ and to obtain the coordinates of any
$u\in\restr{G}{O}$ in this basis, in polynomial time. This directly
yields a basis of $\supspc$ and the coordinates of any $u\in\supspc$
in this basis. It is then standard to transform any linear variety of
$F$, such as $G$, as a set of linear equations using basic linear
algebra. Section \ref{sec-solvlincstr} will be devoted to applying
these techniques to the constraint $\cstr$.
In the sequel we write ${\orbpG}$ for the orbit partition $\orbp{G}$.

\subsection{The super-space $\supspc$}

The groups $G$ and the $\restr{G}{O}$'s are of course all included in
$\Sym{A}$, but $\Sym{A}$ is not elementary Abelian hence not a vector
space. Let $\supspc$ be the permutation group on $A$ generated by the
transitive constituents of $G$, i.e., $\supspc =
\gen{\bigcup_{O\in\orbpG}\restr{G}{O}}$.
\begin{lemma}\label{lem-E}
$\supspc$ is an $\eF{p}$-vector space that contains $G$ and
  $\displaystyle \supspc = \bigoplus_{O\in {\orbpG}}\restr{G}{O}$.
\end{lemma}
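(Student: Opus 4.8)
The plan is to establish the three assertions — that $\supspc$ is an $\eF{p}$-vector space, that it contains $G$, and that it decomposes as the (internal) direct sum of the transitive constituents $\restr{G}{O}$ — by exploiting the fact that permutations acting on disjoint supports commute. First I would observe that each $\restr{G}{O}$ is, by the remarks just above the lemma, an elementary Abelian $p$-group (a homomorphic image of $G$), and that the natural way to view $\restr{g}{O}$ inside $\Sym{A}$ is as the permutation of $A$ that acts as $g$ on $O$ and as the identity elsewhere; with this convention $\restr{G}{O}$ is literally a subgroup of $\Sym{A}$ whose support is contained in $O$. Since distinct orbits $O,O'\in\orbpG$ are disjoint, any element of $\restr{G}{O}$ commutes with any element of $\restr{G}{O'}$, so the whole generating set $\bigcup_{O\in\orbpG}\restr{G}{O}$ consists of pairwise commuting elements each of order dividing $p$. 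Hence $\supspc=\gen{\bigcup_{O\in\orbpG}\restr{G}{O}}$ is Abelian and generated by elements of order $p$, so by the criterion recalled in Section~\ref{sec-def} it is an elementary Abelian $p$-group, i.e. an $\eF{p}$-vector space.

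Next I would prove the direct sum decomposition. Because the supports of the generators are confined to pairwise disjoint orbits, every element $u\in\supspc$ can be written uniquely as a product $u=\prod_{O\in\orbpG} u_O$ with $u_O\in\restr{G}{O}$ (uniqueness because $u_O$ is forced to be $\restr{u}{O}$, the restriction of $u$ to $O$, and the supports are disjoint); equivalently each subgroup $\restr{G}{O}$ meets the subgroup generated by the others only in the identity. In additive/vector-space language this says exactly $\supspc=\bigoplus_{O\in\orbpG}\restr{G}{O}$. Finally, to see $G\subseteq\supspc$: for $g\in G$ we have $g=\prod_{O\in\orbpG}\restr{g}{O}$ as permutations of $A$ (the cycles of $g$ are partitioned among its orbits, which refine $\orbpG$), and each factor $\restr{g}{O}$ lies in $\restr{G}{O}\subseteq\supspc$, so $g\in\supspc$. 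This also shows $G$ sits inside the direct sum as the ``diagonal-like'' subspace $\setof{\prod_O \restr{g}{O}}{g\in G}$.

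I do not expect a serious obstacle here; the only point that needs care is the identification convention — treating $\restr{g}{O}$ simultaneously as an element of $\Sym{O}$ (where ``transitive constituent'' naturally lives) and as an element of $\Sym{A}$ with support inside $O$ (so that the direct sum takes place inside a single ambient group). Once that embedding is fixed, the disjointness of the orbits does all the work: commutativity, order $p$, uniqueness of the factorization, and the inclusion of $G$ all follow from the elementary fact that permutations with disjoint supports commute and that a permutation is the product of its restrictions to the blocks of any partition refining its orbit partition. The verification that these embeddings are group homomorphisms and that the $\eF{p}$-structure on $\supspc$ restricts correctly to each $\restr{G}{O}$ and to $G$ is then routine.
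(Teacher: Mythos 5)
Your proposal is correct and follows essentially the same route as the paper: commutativity and order $p$ come from disjointness of the orbit supports together with each $\restr{G}{O}$ being an elementary Abelian homomorphic image of $G$, the directness of the sum comes from the fact that a vector supported on $O$ can only meet the span of the other constituents in the identity (you phrase this as uniqueness of the factorization $u=\sum_O\restr{u}{O}$, the paper as a trivial intersection $\Sym{O}\cap\Sym{A\setminus O}=\set{0}$, which is the same observation), and $G\subseteq\supspc$ via $g=\sum_O\restr{g}{O}$. Your explicit remark about the embedding convention for $\restr{g}{O}$ as a permutation of $A$ fixing $A\setminus O$ is a point the paper leaves implicit, but it changes nothing in substance.
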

\begin{proof} 
  $\supspc$ is generated by the union of generating sets of the groups
  $\restr{G}{O}$, i.e., by the set $\setof{\restr{g_i}{O}}{1\leq i\leq
    m, O\in \orbpG}$. Since orbits are mutually disjoint and
  permutations on disjoint sets commute,
  $\restr{g_i}{O}\restr{g_j}{O'} = \restr{g_j}{O'}\restr{g_i}{O}$ if
  $O\neq O'$. Furthermore $\restr{g_i}{O}\restr{g_j}{O} =
  \restr{(g_ig_j)}{O} = \restr{(g_jg_i)}{O} =
  \restr{g_j}{O}\restr{g_i}{O}$ since $\restr{G}{O}$ is Abelian. Hence
  $\supspc$ is Abelian, and similarly its non-trivial elements have order
  $p$.

  We therefore use the additive notation in $\supspc$, which yields $\supspc =
  \sum_{O\in\orbpG}\restr{G}{O}$. For any $O\in\orbpG$, let
  $A'=A\setminus O$ and $\supspc_O = \sum_{O'\neq O}\restr{G}{O'}$, since
  $\supspc_O$ is a subgroup of $\Sym{A'}$, then $\restr{G}{O}\cap \supspc_O
  \subseteq \Sym{O}\cap\Sym{A'} = \set{0}$. This proves that $\supspc$ is
  the (internal) direct sum of the $\restr{G}{O}$'s, which is written
  $\supspc = \bigoplus_{O\in {\orbpG}}\restr{G}{O}$.

  It is clear that $G$ is a subspace of $\supspc$ since any $u\in G$ can be
  written as $u=\sum_{o\in {\orbpG}}\restr{u}{o}$ and hence belongs to
  $\supspc$. 
\end{proof}

We call $\supspc$ the \emph{super-space} of $G$. Since the sum in
Lemma \ref{lem-E} is direct, the decomposition of any vector $u$ in
$\supspc$ as a sum of elements of the $\restr{G}{O}$'s is unique, the
dimension $d$ of $\supspc$ is the sum of the dimensions $d_O$ of
$\restr{G}{O}$ for $O\in\orbpG$, and a basis for $\supspc$ can be
obtained by concatenating bases for the subspaces $\restr{G}{O}$.

\subsection{Orbits as affine spaces}\label{sec-affspc}

For all $O\in\orbpG$, the transitive constituent $\restr{G}{O}$ is of
course transitive in $O$. This trivial fact yields a fundamental property:

\begin{lemma}$(O,\restr{G}{O})$ is an affine space.
\end{lemma}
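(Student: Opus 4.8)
The plan is to exhibit $(O,\restr{G}{O})$ as an affine space modelled on the $\eF{p}$-vector space $\restr{G}{O}$, which by definition amounts to showing that the additive group of $\restr{G}{O}$ acts \emph{simply transitively} on $O$, i.e., that for every $a,b\in O$ there is a \emph{unique} $v\in\restr{G}{O}$ with $a^v=b$. Once this is established, the associated difference map $O\times O\to\restr{G}{O}$ sending $(a,b)$ to that unique $v$ is well defined, and Chasles' relation $\vec{ab}+\vec{bc}=\vec{ac}$ is immediate from $a^{v+w}=a^{vw}=(a^v)^w$; the $\eF{p}$-scalar multiplication plays no role in the action and is simply inherited on $O$ once an origin is chosen.

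Transitivity is free of charge: $O$ is a single $G$-orbit, so for $a,b\in O$ there is $g\in G$ with $a^g=b$, and then $a^{\restr{g}{O}}=b$ because $a,b\in O$. The substance of the proof is uniqueness, equivalently that the stabiliser of any point of $O$ in $\restr{G}{O}$ is trivial --- i.e., the action is free. Here I would invoke the standard fact that a transitive Abelian permutation group is regular, applied to $\restr{G}{O}$, which is transitive on $O$ and Abelian (being a homomorphic image of the Abelian group $G$). Concretely: suppose $v=\restr{g}{O}$ fixes some $a\in O$; given any $b\in O$, pick by transitivity $w\in\restr{G}{O}$ with $a^w=b$; then, using that $\restr{G}{O}$ is Abelian so $vw=wv$, we get $b^v=(a^w)^v=a^{wv}=a^{vw}=(a^v)^w=a^w=b$. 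Hence $v$ fixes every point of $O$, so $v=0$. This gives uniqueness: if $a^v=a^{v'}=b$ then $v-v'$ fixes $a$, so $v=v'$.

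This argument presents no genuine obstacle; the only point requiring care is to be precise about which structure makes $(O,\restr{G}{O})$ an affine space --- the model space is the vector space $\restr{G}{O}$ itself --- and to notice that freeness of the action rests solely on commutativity of $\restr{G}{O}$, not on the order-$p$ condition or on $\restr{G}{O}$ being a subgroup of $\Sym{A}$ (which it need not be). The elementary Abelian hypothesis enters only in guaranteeing that $\restr{G}{O}$ carries an $\eF{p}$-vector space structure in the first place, as recorded in the Definitions section.
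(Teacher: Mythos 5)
Your proposal is correct and follows essentially the same route as the paper: both reduce the statement to simple transitivity of $\restr{G}{O}$ on $O$, get transitivity for free from $O$ being a $G$-orbit, and establish uniqueness/freeness by the same commutativity computation $b^u = a^{w+u} = a^{u+w} = (a^u)^w$ propagating agreement at one point to all of $O$ (the paper phrases this as injectivity of $\affbij{a}$, you as triviality of point stabilisers --- an immaterial difference). The paper itself notes afterwards that this is just the classical fact that transitive Abelian groups are regular, which is exactly the fact you invoke and reprove.
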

\begin{proof}
We define the external sum $+: O\times \restr{G}{O} \rightarrow O$,
for all $a\in O$ and $u\in \restr{G}{O}$, by $a+u = a^u$ (the image of
$a$ by $u$) and prove that the two axioms of affine spaces hold. For
all $v\in \restr{G}{O}$ it is clear that
\[(a+u)+v = (a^u)^v = a^{(u+v)} = a+(u+v).\]
Consider $\affbij{a}:\restr{G}{O} \rightarrow O$ such that
$\affbijof{a}{u} = a+u$, we prove that $\affbij{a}$ is bijective. It
is obviously onto since $\restr{G}{O}$ is transitive on $O$: $\forall
b\in O$, $\exists u\in \restr{G}{O}$ such that $b = a^u =
\affbijof{a}{u}$. Assume now that $\affbijof{a}{u} = \affbijof{a}{v}$,
i.e., $a^u=a^v$, then for any $b\in O$, if $w\in \restr{G}{O}$ is such
that $a^w=b$, then
\[b^u = a^{(w+u)} = a^{(u+w)} = (a^u)^w = (a^v)^w = b^v,\]
hence $u=v$, and $\affbij{a}$ is injective. 
\end{proof}

This is nothing more than a geometric interpretation of a known
result: that transitive Abelian groups are ``regular'' (see
\cite[theorem 10.3.4]{scott}). This entails that $\card{O} =
\card{\restr{G}{O}}$, and that the external sum is \emph{regular},
i.e., an equality $a+u=b$ determines every term from the two
others. In particular the unique vector $u\in \restr{G}{O}$ such that
$a+u=b$ is usually written $u=b-a$ (or $\overrightarrow{ab}$), and we
also write $b=a-u$ for $b=a+(-u)$, which is equivalent to
$b+u=a$. Note that there is one external sum (and difference) per
orbit, but since they are disjoint it is unambiguous to denote them
with the same symbol.

In the sequel we use the additive notation on sets $S,S'$ of vectors
or elements of an orbit, i.e., if $\varepsilon\in\set{+,-}$ then
$S\varepsilon S' = \setof{s\varepsilon s'}{s\in S, s'\in S'}$. If one
set is a singleton, say $S=\set{s}$, we write $s\varepsilon S'$ for
$\set{s}\varepsilon S'$. This is of course compatible with the
notations already used when $S$ and $S'$ are subspaces of $\supspc$.

\subsection{Computing a basis for $\supspc$}\label{sec-compbasis}

Since $\restr{G}{O}$ is a finite $\eF{p}$-vector space, its
cardinality must be $p^{d_O}$. But this cardinality is also that of
$O$ which is known, hence the dimension $d_O$ can be computed:
$d_O=\log_p\card{O}$. Furthermore, since $\restr{g_1}{O}, \ldots,
\restr{g_m}{O}$ is a generating set for $\restr{G}{O}$, it is
possible to extract a basis $\supbasis_O$ of $\restr{G}{O}$ from
this set by discarding $m-d_O$ linearly dependent vectors. For this a test
of linear dependence is required.

\begin{lemma}\label{lem-orblin}
 For any subspace $H$ of $\restr{G}{O}$, any
  $u\in\restr{G}{O}$ and any $a\in O$,
\[u\in H\ \mbox{iff}\ a^u\in a^H.\]
\end{lemma}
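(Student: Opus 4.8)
The statement $u \in H \iff a^u \in a^H$ for a subspace $H$ of $\restr{G}{O}$ is essentially a linear-algebra reformulation of the regularity of the affine action established in the previous lemma, restricted to the sub-affine-space carried by $H$. The forward direction is immediate: if $u \in H$ then $a^u = a + u \in a + H = a^H$ by definition of the external sum. The substance is the converse.

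For the converse, suppose $a^u \in a^H$, so $a^u = a^v$ for some $v \in H$. The key tool is the injectivity of $\affbij{a}$ proved in the preceding lemma: since $a^u = a^v$ and both $u$ and $v$ lie in $\restr{G}{O}$, we get $u = v$, and therefore $u = v \in H$. So the proof is really just "apply regularity of the action of $\restr{G}{O}$ on $O$, noting that $H \subseteq \restr{G}{O}$." One should be slightly careful that the external sum $a + u = a^u$ is well-defined for $u \in \restr{G}{O}$ and that $H$, being a subspace of $\restr{G}{O}$, consists of genuine elements of $\restr{G}{O}$ — but these are exactly the hypotheses in the lemma statement, so nothing is missing.

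It may be worth spelling out why this is the intended consequence: $a^H = a + H = \affbij{a}(H)$ is the image of the subspace $H$ under the bijection $\affbij{a}$, i.e. an affine subspace of $O$ (a coset of $H$ translated to pass through $a$). The lemma says membership of $u$ in $H$ can be decided by a single application of $u$ to a single point $a$ and checking whether the result lands in this affine subspace — no need to compare $u$ against all of $H$. This is precisely what makes the later linear-dependence test polynomial: to test $u \in H$ for $H = \gen{v_1,\dots,v_r}$ one computes $a^u$ and $a^{v_1},\dots,a^{v_r}$, generates the orbit-coset $a^H$ from the images $a^{v_i}$, and checks membership, all in time polynomial in $\card{A}$.

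I do not expect a real obstacle here. The only point requiring a moment's attention is making sure one invokes injectivity of $\affbij{a}$ for the map associated with $\restr{G}{O}$ (not just $H$), since the hypothesis only gives $v \in H \subseteq \restr{G}{O}$ and $u \in \restr{G}{O}$ — so both points need to be viewed inside $\restr{G}{O}$ for the earlier lemma to apply. Everything else is a one-line unfolding of definitions.
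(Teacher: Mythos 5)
Your proof is correct and follows the same route as the paper's: both directions reduce to the regularity (injectivity of $\affbij{a}$) established in the affine-space lemma, together with the identification $a+u=a^u$ and $a+H=a^H$. The paper compresses this into the single line ``by regularity of the external sum, $u\in H$ iff $a+u\in a+H$,'' which is exactly your argument.
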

\begin{proof}
By regularity of the external sum, $u\in H$ iff $a+u\in a+H$. By
definition $a+u = a^u$ and $a+H = \setof{a+v}{v\in H} =
\setof{a^v}{v\in H} = a^H$. 
\end{proof}

Linear dependence is therefore reduced to computing the orbit of an
arbitrary point $a\in O$. A basis of $\restr{G}{O}$ can be built step by
step by computing the corresponding orbit partition $\QP$ of $O$, with the
following function $\buildbasisfn$:

\[\left\{\begin{array}{lcll}
\buildbasis{[\ ]}{\mathbf{h}}{\QP} & = & \mathbf{h}, & \\
\buildbasis{[u]@l}{\mathbf{h}}{\QP} & = & \buildbasis{l}{\mathbf{h}}{\QP}\ 
& \mbox{if}\ a^u\in a[\QP],\\
& = & \buildbasis{l}{[u] @ \mathbf{h}}{\orbp{u}\merge\QP}\ &\mbox{otherwise.}
\end{array}\right.\]

Here $l$ is a list of vectors, $@$ is the concatenation of lists and $[\ ]$
is the empty list. 
\begin{lemma}
$\buildbasis{[\restr{g_1}{O},\ldots, \restr{g_m}{O}]}{[\ ]}{\botp{O}}$
  is a basis of $\restr{G}{O}$.
\end{lemma}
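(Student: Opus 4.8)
The statement claims that running $\buildbasisfn$ on the full generator list, starting from the empty basis and the bottom partition $\botp{O}$, produces a basis of $\restr{G}{O}$. The plan is to prove a stronger invariant by induction on the list argument $l$, from which the claim follows. The invariant I would maintain is: whenever $\buildbasisfn$ is called with arguments $l$, $\mathbf{h}$, $\QP$ during the evaluation of $\buildbasis{[\restr{g_1}{O},\ldots,\restr{g_m}{O}]}{[\ ]}{\botp{O}}$, the list $\mathbf{h}$ is a basis of the subspace $\gen{\mathbf{h}}$ it spans, $\QP = \orbp{\gen{\mathbf{h}}}$ (the orbit partition of $O$ under that subspace), and $\gen{\mathbf{h}}$ equals the span of the generators already consumed (i.e.\ of $\restr{g_1}{O},\ldots,\restr{g_i}{O}$ if $l = [\restr{g_{i+1}}{O},\ldots,\restr{g_m}{O}]$). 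The base case $l=[\ ]$ at the start is immediate since $\gen{[\ ]} = \set{0}$ and $\orbp{\set{0}} = \botp{O}$.

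For the inductive step, suppose the invariant holds for a call with arguments $[u]@l'$, $\mathbf{h}$, $\QP$, where $u = \restr{g_{i+1}}{O}$. Fix any $a\in O$ (the definition is independent of this choice, which should be noted — or one simply observes correctness for each $a$ separately via Lemma~\ref{lem-orblin}). There are two cases. If $a^u\in a[\QP]$, i.e.\ $a^u\in a^{\gen{\mathbf{h}}}$ using $\QP = \orbp{\gen{\mathbf{h}}}$, then Lemma~\ref{lem-orblin} gives $u\in\gen{\mathbf{h}}$, so $\gen{\mathbf{h}}$ already contains $\restr{g_1}{O},\ldots,\restr{g_{i+1}}{O}$, and the recursive call $\buildbasis{l'}{\mathbf{h}}{\QP}$ preserves the invariant with one more generator consumed. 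If $a^u\notin a[\QP]$, then by Lemma~\ref{lem-orblin} $u\notin\gen{\mathbf{h}}$, so $[u]@\mathbf{h}$ is still a linearly independent list (prepending a vector outside the span of an independent set keeps it independent), it is a basis of $\gen{[u]@\mathbf{h}} = \gen{\mathbf{h}} + \gen{u}$, which is exactly the span of $\restr{g_1}{O},\ldots,\restr{g_{i+1}}{O}$; and the new partition passed is $\orbp{\gen{u}}\merge\QP = \orbp{\gen{u}}\merge\orbp{\gen{\mathbf{h}}} = \orbp{\gen{\gen{u}\cup\gen{\mathbf{h}}}} = \orbp{\gen{[u]@\mathbf{h}}}$ by the merge-of-orbit-partitions identity recalled in Section~\ref{sec-def}. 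So the invariant is again preserved.

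By induction, the terminal call is reached with $l=[\ ]$, and the invariant says the returned list $\mathbf{h}$ is a basis of the span of all of $\restr{g_1}{O},\ldots,\restr{g_m}{O}$, which is $\restr{G}{O}$ by definition of the transitive constituent as the group generated by the images of the generators. Hence $\buildbasis{[\restr{g_1}{O},\ldots,\restr{g_m}{O}]}{[\ ]}{\botp{O}}$ is a basis of $\restr{G}{O}$, as claimed.

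The only genuinely delicate point is bookkeeping the orbit-partition component $\QP$ of the invariant: one must check that $\orbp{\gen{u}}\merge\QP$ correctly updates $\orbp{\gen{\mathbf{h}}}$ to $\orbp{\gen{[u]@\mathbf{h}}}$, which is precisely the identity $\orbp{\gen{G\cup G'}} = \orbp{G}\merge\orbp{G'}$ quoted from \cite[chap.~7]{GB} in Section~\ref{sec-def}; everything else (linear independence of $[u]@\mathbf{h}$, dimension counting) is the standard incremental-basis argument. One should also remark, either here or by appeal to Lemma~\ref{lem-orblin}, that the test $a^u\in a[\QP]$ gives the same verdict for every $a\in O$, so the algorithm is well-defined regardless of which representative is used.
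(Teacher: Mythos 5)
Your proof is correct and follows essentially the same route as the paper's: both maintain the invariant $\QP = \orbp{\gen{\mathbf{h}}}$ together with the fact that $\gen{\mathbf{h}}$ spans the generators consumed so far (the paper phrases this as invariance of $\gen{l@\mathbf{h}}$), and both reduce the membership test $a^u\in a[\QP]$ to linear dependence via Lemma~\ref{lem-orblin}. Your version is merely more explicit about the base case, the partition-update step via $\orbp{\gen{G\cup G'}}=\orbp{G}\merge\orbp{G'}$, and the independence of the test from the choice of $a\in O$.
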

\begin{proof}
Let $\supbasis_O = \buildbasis{[\restr{g_1}{O},\ldots,
    \restr{g_m}{O}]}{[\ ]}{\botp{O}}$. Since $\botp{O}$ is the orbit
partition of the trivial group $\set{0}$ on $O$, the invariant $\QP =
\orbp{\gen{\mathbf{h}}}$ is maintained throughout the
computation. This means that the class $a[\QP]$ of $a$ modulo $\QP$ is
the orbit $a^{\gen{\mathbf{h}}}$, and that $a^u\in a[\QP]$ is
equivalent to $u\in\gen{\mathbf{h}}$ by Lemma \ref{lem-orblin}. Hence
$u$ is added to $\mathbf{h}$ if and only if it is linearly independent
from the latter, which proves that $\mathbf{h}$ remains free. This
also proves that $\gen{l@\mathbf{h}}$ is invariant, hence
$\gen{\supbasis_O} = \gen{\restr{g_1}{O},\ldots, \restr{g_m}{O}} =
\restr{G}{O}$ and $\supbasis_O$ is free, it is thus a basis of
$\restr{G}{O}$.
\end{proof}

\begin{example} Let $O=\set{1,\ldots,8}$ and
\begin{eqnarray*}
g_1 & = & (1\ 2)(3\ 4)(5\ 6)(7\ 8),\\
g_2 & = & (1\ 5)(2\ 6)(3\ 7)(4\ 8),\\
g_3 & = & (1\ 3)(2\ 4)(5\ 7)(6\ 8),
\end{eqnarray*}
We choose $a=1$, then
\begin{eqnarray*}
\supbasis_O & = & \buildbasis{[g_1,g_2,g_3]}{[\ ]}{\botp{O}}  \\
& = & \buildbasis{[g_2,g_3]}{[g_1]}{\orbp{g_1}}\  \mbox{since}\ 1^{g_1}
    = 2 \not\in 1[\botp{O}] = \set{1}\\
& = & \buildbasis{[g_3]}{[g_2,g_1]}{\orbp{g_2}\merge\orbp{g_1}}\ 
    \mbox{since}\ 1^{g_2} = 5 \not\in 1[\orbp{g_1}] = \set{1,2}\\
& = & \buildbasis{[\ ]}{[g_3,g_2,g_1]}{\topp{O}}\ 
    \mbox{since}\ 1^{g_3} = 3 \not\in 1[\orbp{g_2}\merge\orbp{g_1}] =
    \set{1,2,5,6}\\
& = & [g_3,g_2,g_1].
\end{eqnarray*}
\end{example}

Of course the algorithm can be interrupted once $\mathbf{h}$ has $d_O$
elements (or equivalently when $\QP = \topp{O}$). Building
$\supbasis_O$ requires at most $m$ recursive calls and the computation
of exactly $d_O\leq m$ orbit partitions of $O$, each being polynomial
in $\card{O}\leq n$, hence $\supbasis_O$ is computed in time
polynomial in $n$ and $m$.

The bases $\supbasis_O$ can be concatenated (in an arbitrary order)
to form a basis $\supbasis$ of $\supspc$. The length $d$ of this basis may
be greater than $m$, but since \[d_O\ =\ \log_p\card{O}\ \leq
\ \frac{\card{O}}{p},\] necessarily \[d\ =\ \sum_{O\in {\orbpG}}d_O\ \leq
\ \sum_{O\in {\orbpG}}\frac{\card{O}}{p}\ =\ \frac{n}{p},\] hence computing
$\supbasis$  is again polynomial in $n$ and $m$.

\subsection{Computing coordinates in the basis for $\supspc$}\label{sec-compcoord}

The coordinates of any vector $u\in \supspc$ in basis $\supbasis$ can be
obtained by computing, for all $O\in {\orbpG}$, the coordinates
of $\restr{u}{O}\in \restr{G}{O}$ in the basis $\supbasis_O$, and by
concatenating these coordinates in the same order as the one used to
build $\supbasis$. We show how to compute the coordinates in
$\supbasis_O$ of the permutations in $\restr{G}{O}$.

Since $\supbasis_O$ is a basis of $\restr{G}{O}$, there is a 1-1
correspondence from the tuples $\tuple{x_1,\ldots,x_{d_O}}\in
(\eF{p})^{d_O}$ to the elements of $\restr{G}{O}$, given by
$\sum_{i=1}^{d_O} x_ih_i$, which can be computed in polynomial time:
each $x_ih_i$ requires composing $x_i-1< p$ times the permutation
$h_i$ of $O$ with itself, hence computing a permutation from its coordinates
in $\supbasis_O$ can be computed in time $\mathrm{O}(d_Op\card{O})$, which is
bounded by $\mathrm{O}(n\log n)$ (since $p$ is a constant).

Computing the coordinates in $\supbasis_O$ of a given $u\in
\restr{G}{O}$ means computing the inverse of the previous
correspondence. This can be performed by browsing through all possible
values $\tuple{x_1,\ldots, x_{d_O}}\in(\eF{p})^{d_O}$ until
$\sum_{i=1}^{d_O} x_ih_i$ equals $u$. Since $\card{O} =
\card{\restr{G}{O}} = p^{d_O}$, this requires at most $\card{O}$
iterations, hence can be computed in time $\mathrm{O}(d_Op\card{O}^2)$, which
is bounded by $\mathrm{O}(n^2\log n)$.

The same technique may be applied if $u$ is given as $b-a$, where
$a,b\in O$. In this case it is necessary to check whether $b = a +
\sum_{i=1}^{d_O} x_ih_i$, i.e., whether $b$ is the image of $a$ by the
permutation $\sum_{i=1}^{d_O} x_ih_i$. The complexity is thus the same as
above. Note that this also allows to compute $b-a$ explicitly as
a permutation.

From a practical point of view we should avoid repeated computations
of permutations from coordinates: this could be done by storing values
in a suitable array. Using our geometric interpretation, we choose
arbitrarily an origin $a\in O$ and define the coordinates of any point
$b\in O$ as those of the vector $b-a$ relative to $\supbasis_O$. We
can therefore fill an array that associates its coordinates to each
entry $b\in O$, by browsing through all possible coordinates as
explained above. Since this array has $\card{O}$ entries, filling it
takes polynomial time. Then, given a permutation $u$, we need only
compute the image $b=u^a$ and pick the coordinates of $b$ in the
array; these are the coordinates of $u$.

In the sequel we write $\supbasis=\basevec{1},\ldots,\basevec{d}$, and
for any $u\in \supspc$, if $u=\sum_{i=1}^d x_i\basevec{i}$ where the
$x_i\in\eF{p}$ are the coordinates of $u$ in $\supbasis$, we write
$u_{\supbasis} = \transpose{(x_1\ \cdots\ x_d)}$ the column matrix of
these coordinates.

\subsection{A characterization of linear varieties in $\supspc$}\label{sec-linvar}

Since the super-space $\supspc$ is isomorphic to the vector space
$(\eF{p})^d$, and this isomorphism can be computed (through the
coordinates in $\supbasis$) in both directions, computations with
matrices can be substituted for computations with permutations. This
means that standard algorithms from linear algebra apply, in
particular Gaussian elimination. Note that \emph{exact} computations can be
performed in $\eF{p}$, including division, in time at most quadratic
in the number of bits (see, e.g. \cite[p.117]{Geddes92}), hence in constant
time in the present context.

In particular, it is now straightforward to test whether a family of
$l\leq d$ vectors $u_1,\ldots, u_l \in \supspc$ is linearly dependent,
by first computing the coordinates $u_i=\sum_{j=1}^d
x_{ij}\basevec{j}$ and then by performing Gaussian elimination on the
$l\times d$-matrix $(x_{ij})$ (which requires a number of operations
at most cubic in $d$). The family is linearly dependent iff Gaussian
elimination yields a zero row in the resulting matrix (the number of
non-zero lines after Gaussian elimination is the rank of the matrix,
i.e., the dimension of the space $\gen{u_1,\ldots, u_l}$).

Assume we are given a \emph{linear variety} $v+H$ of $\supspc$ by the
permutation $v$ and a generating set for the subspace $H$. We can compute
the coordinates in $\supbasis$ of these permutations and, using the
previous procedure, extract a basis $h_1,\ldots,h_{d'}$ of $H$ from the
generators of $H$, where $d'$ is the dimension of $H$. The vectors
$h_1,\ldots,h_{d'}$ together with the vectors in $\supbasis$ form a
generating family of $\supspc$; the free family $h_1,\ldots,h_{d'}$ can
therefore be completed into a basis $\mathbf{h}=h_1,\ldots,h_d$ of
$\supspc$ by adding $d-d'$ vectors taken from $\supbasis$. If $P$ denotes
the matrix whose $i^{\mathrm{th}}$ column is $\colmat{(h_i)}{\supbasis}$
then $P$ is the change of basis matrix from $\mathbf{h}$ to $\supbasis$:
$\colmat{u}{\supbasis} = P\colmat{u}{\mathbf{h}}$ for all
$u\in\supspc$. This matrix is invertible and its inverse can be computed
using the Gauss-Jordan algorithm in time cubic in $d$.

This means that the coordinates of any vector $u$ in $\mathbf{h}$ (or
in any basis) can be computed in polynomial time through
$\colmat{u}{\mathbf{h}} = \inv{P} \colmat{u}{\supbasis}$. Membership
of $u$ to the subspace $H$ may be checked simply by making sure the
last $d-d'$ coordinates of $\colmat{u}{\mathbf{h}}$ are equal to
zero. This can be expressed by building the diagonal $d\times
d$-matrix $D$ with ones on the last $d-d'$ positions of the diagonal
and zeroes elsewhere:
\[D = \left(\begin{array}{cc}
\mathbf{0} & \mathbf{0} \\
\mathbf{0} & I \\
\end{array}\right),\]
where $I$ is the $(d-d')\times(d-d')$ identity matrix. Thus vector
$u\in \supspc$ belongs to $H$ iff $D\colmat{u}{\mathbf{h}} = 0$. Let
$\matrvar{H} = D\inv{P}$, we have shown that:

\begin{lemma}\label{lem-matrvar}
From any linear variety $v+H$ of $\supspc$ can be computed in
polynomial time a $d\times d$-matrix $\matrvar{H}$ such that $\forall
u\in \supspc$, \[u \in
v+H\ \mbox{iff}\ \matrvar{H}\colmat{u}{\supbasis} =
\matrvar{H}\colmat{v}{\supbasis}.\]
\end{lemma}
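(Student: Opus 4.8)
The plan is to assemble the matrix $\matrvar{H}$ from the ingredients already developed just above the statement, and then verify the stated equivalence by a short chain of elementary manipulations. First I would recall the construction: from the generating set of $H$ I extract (using the Gaussian-elimination-based linear dependence test of Section~\ref{sec-linvar}) a basis $h_1,\ldots,h_{d'}$ of $H$, complete it with $d-d'$ vectors from $\supbasis$ to a basis $\mathbf{h}=h_1,\ldots,h_d$ of $\supspc$, form the change-of-basis matrix $P$ (whose $i^{\mathrm{th}}$ column is $\colmat{(h_i)}{\supbasis}$), compute $\inv{P}$ by Gauss--Jordan, build the diagonal matrix $D$ that projects onto the last $d-d'$ coordinates, and set $\matrvar{H}=D\inv{P}$. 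Every one of these steps is polynomial in $d$ (hence in $n$ and $m$, since $d\leq n/p$), so the complexity claim follows immediately by composing the bounds already stated.

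For correctness, I would argue as follows. For any $u\in\supspc$ we have $\colmat{u}{\mathbf{h}}=\inv{P}\colmat{u}{\supbasis}$, so $\matrvar{H}\colmat{u}{\supbasis}=D\colmat{u}{\mathbf{h}}$ picks out exactly the last $d-d'$ coordinates of $u$ in the basis $\mathbf{h}$ (placing them in the last $d-d'$ positions, zeroes elsewhere). Since $h_1,\ldots,h_{d'}$ is a basis of $H$ and $h_{d'+1},\ldots,h_d$ span a complement, a vector $w\in\supspc$ lies in $H$ iff its last $d-d'$ coordinates in $\mathbf{h}$ vanish, i.e.\ iff $D\colmat{w}{\mathbf{h}}=0$, i.e.\ iff $\matrvar{H}\colmat{w}{\supbasis}=0$. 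Now apply this with $w=u-v$: by linearity of the coordinate map, $\matrvar{H}\colmat{u-v}{\supbasis}=\matrvar{H}\colmat{u}{\supbasis}-\matrvar{H}\colmat{v}{\supbasis}$, so $u-v\in H$ — equivalently $u\in v+H$ — iff $\matrvar{H}\colmat{u}{\supbasis}=\matrvar{H}\colmat{v}{\supbasis}$, which is the claimed equivalence.

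There is no real obstacle here; the lemma is essentially a repackaging of the standard linear-algebra facts recalled in Section~\ref{sec-linvar}, and the only thing to be careful about is bookkeeping: making sure the ordering of the basis $\mathbf{h}$ matches the block structure of $D$, and that $v$ is indeed required to be a member of $\supspc$ (it is, since $v+H$ is assumed to be a linear variety of $\supspc$, so both $v$ and $H$ sit inside $\supspc$ and have well-defined coordinate vectors in $\supbasis$). If I wanted to be thorough I would also note that $\matrvar{H}$ depends on the arbitrary choices made (which generators are discarded, which vectors of $\supbasis$ are adjoined), but the \emph{equivalence} it witnesses does not, since it characterizes membership in the fixed set $v+H$; the lemma only asserts existence of such a matrix, so this ambiguity is harmless. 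The proof is therefore short: state the construction, observe polynomiality by citing the per-step bounds, then run the three-line equivalence computation above.
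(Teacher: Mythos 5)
Your proposal is correct and follows essentially the same route as the paper, which presents exactly this construction ($\matrvar{H}=D\inv{P}$ via basis extraction, completion, and change of basis) in the text immediately preceding the lemma and states the lemma as its summary. Your explicit reduction of $u\in v+H$ to $u-v\in H$ via linearity of the coordinate map is the (only implicitly stated) final step of the paper's argument, so nothing is missing.
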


Conversely, it is well known that the set of solutions $u$ of any
system of linear equations on $d$ unknowns (the coordinates of $u$ in
$\supbasis$) is either empty or a linear variety of $\supspc$.

\section{Solving linear constraints}\label{sec-solvlincstr}

The vector space $G$ being a linear variety of $\supspc$ by Lemma
\ref{lem-E}, its elements are characterized as the solutions $u$ of a
system of linear equations $\matrvar{G}\colmat{u}{\supbasis} = 0$, as shown
in Lemma \ref{lem-matrvar} (by taking $v=0$). We now investigate how to
characterize the constraint $\cstr$ by another system of linear equations.

\subsection{Constraints as sets of vectors}

The first step towards this characterization is to explicitly
represent the set of vectors $u\in E$ that satisfy a constraint
$\cstr$.
Let $\displaystyle V_O = \bigcap_{a\in O}\cstrat{a}-a$ for all
$O\in\orbpG$.

\begin{lemma}\label{lem-VO}
A vector $u\in \supspc$ satisfies $\cstr$ iff
$\displaystyle u\in\sum_{O\in\orbpG}V_O$. 
\end{lemma}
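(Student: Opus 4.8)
The plan is to unfold the definitions on both sides and use the fact that $\supspc$ decomposes as the direct sum $\bigoplus_{O\in\orbpG}\restr{G}{O}$ (Lemma~\ref{lem-E}), together with the affine/regularity machinery of Section~\ref{sec-affspc}. First I would write any $u\in\supspc$ uniquely as $u=\sum_{O\in\orbpG}u_O$ with $u_O\in\restr{G}{O}$, and observe that for $a\in O$ the image $a^u$ depends only on the component $u_O$, since the other components fix $a$: $a^u=a^{u_O}=a+u_O$. Hence the global condition ``$\forall a\in A,\ a^u\in\cstrat{a}$'' splits orbit-by-orbit into ``$\forall O\in\orbpG,\ \forall a\in O,\ a+u_O\in\cstrat{a}$''.

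Next I would handle a single orbit $O$. By regularity of the external sum on $(O,\restr{G}{O})$, for $a\in O$ and $u_O\in\restr{G}{O}$ the relation $a+u_O\in\cstrat{a}$ is equivalent to $u_O\in\cstrat{a}-a$ (the set of differences $\setof{b-a}{b\in\cstrat{a}\cap O}$; note $\cstrat{a}\subseteq a^G\subseteq O$ by the normal-form assumption, so this is well defined). Quantifying over all $a\in O$, the component $u_O$ satisfies the constraint on $O$ iff $u_O\in\bigcap_{a\in O}(\cstrat{a}-a)=V_O$. So $u$ satisfies $\cstr$ iff $u_O\in V_O$ for every $O$.

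Finally I would match this with the claimed membership $u\in\sum_{O\in\orbpG}V_O$. The forward direction is immediate: if each $u_O\in V_O$ then $u=\sum_O u_O\in\sum_O V_O$. For the converse I need uniqueness of the decomposition: since $V_O\subseteq\restr{G}{O}$ and the sum $\bigoplus_O\restr{G}{O}$ is direct, any way of writing $u=\sum_O w_O$ with $w_O\in V_O\subseteq\restr{G}{O}$ must have $w_O=u_O$ for all $O$; hence $u_O\in V_O$, and $u$ satisfies $\cstr$.

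The only subtlety worth spelling out — and the one place a careless argument could slip — is that $V_O$ is a subset of $\restr{G}{O}$, not of all of $\Sym{O}$, so that the directness of the sum genuinely pins down the components. This needs the observation that each $\cstrat{a}-a$ lives in $\restr{G}{O}$, which follows because $\cstrat{a}\subseteq a^G$ and $b-a\in\restr{G}{O}$ whenever $a,b$ lie in the same $G$-orbit. Everything else is a routine translation through the affine-space regularity of Lemma in Section~\ref{sec-affspc}.
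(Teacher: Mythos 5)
Your proof is correct and follows essentially the same route as the paper's: decompose $u$ into its orbit components $\restr{u}{O}$, use the affine-space regularity to turn $a^u\in\cstrat{a}$ into $\restr{u}{O}\in\cstrat{a}-a$, intersect over $a\in O$ to get $V_O$, and recover the converse from the directness of $\bigoplus_{O}\restr{G}{O}$. The ``subtlety'' you flag (that $V_O\subseteq\restr{G}{O}$ pins down the components) is left implicit in the paper's converse direction, so spelling it out is a harmless improvement rather than a divergence.
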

\begin{proof}
Assume a vector $u\in \supspc$ satisfies the constraint $\cstr$, i.e., for
all $a\in A$, $a^u\in\cstrat{a}$. Then for all $O$ in the orbit
partition ${\orbpG}$ and for all $a\in O$, since
$a^u=a^{\restr{u}{O}}\in O$ and $(O,\restr{G}{O})$ is an affine space,
we may write $a+\restr{u}{O}\in\cstrat{a}$, and since
$\cstrat{a}\subseteq O$, this is equivalent to
$\restr{u}{O}\in\cstrat{a}-a$. But this is true for all $a\in O$,
hence $\restr{u}{O}\in V_O$. Since $u=\sum_{O\in
  {\orbpG}}\restr{u}{O}$, vector $u$ must belong to $\sum_{O\in
  {\orbpG}}V_O$.

Conversely, let $u\in \sum_{O\in {\orbpG}}V_O$ and let $a\in
A$. If $O=a^G$ then $\restr{u}{O}\in V_O$, hence
in particular $\restr{u}{O}\in \cstrat{a}-a$, i.e., $a^u =
a+\restr{u}{O} \in \cstrat{a}$. Hence $u$ satisfies the
constraint $\cstr$. 
\end{proof}

The sets $V_O$ can be computed for each $O\in {\orbpG}$ by selecting
an arbitrary $a\in O$ and computing the coordinates of $c-a$ in
$\supbasis_O$ for all $c\in\cstrat{a}$, and this operation is
polynomial in $n$ for each $c$ as mentioned in section \ref{sec-compcoord}.
Provided $\cstr$ is a $k$-constraint this yields at most $k$ elements
in $\cstrat{a}-a$. Then, $V_O$ is the set of those vectors $u$ from
$\cstrat{a}-a$ that also belong to $\cstrat{b}-b$ for all $b\in
O\setminus\set{a}$, i.e., such that $b^u\in\cstrat{b}$, which requires
that $u$ be also computed as an explicit permutation.  Computing the
coordinates of all the elements of $V_O$ is therefore polynomial in $n$
and $k$.

\subsection{Linear constraints}\label{sec-lincstr}

The size of the set $\sum_{O\in {\orbpG}}V_O$ is $\prod_{O\in
  {\orbpG}}\card{V_O}$.  If one of the sets $V_O$ is empty, then
obviously constraint $\cstr$ is unsatisfiable. But in general this set
is exponential in the number of $G$-orbits (and therefore in $n$, the
worst case being $2^{\frac{n}{2}}$ with $\frac{n}{2}$ orbits of size
2). This motivates the following definition.

\begin{definition}
$\cstr$ is a \emph{linear constraint} if $\displaystyle \sum_{O\in
    {\orbpG}}V_O$ is either empty or a linear variety of $\supspc$.
\end{definition}

In order to check whether a constraint $\cstr$ is linear or not, we
provide a simple characterization of this property. For $O\in\orbpG$,
let $w_O$ be an arbitrary element of $V_O$ and $\cstrspc_O=V_O-w_O$.

\begin{lemma}\label{lem-dimvar}
$\displaystyle \sum_{O\in\orbpG}V_O$ is a linear variety of $\supspc$ iff
  $\forall O\in\orbpG$, $\dim\gen{\cstrspc_O} = \log_p\card{V_O}$.
\end{lemma}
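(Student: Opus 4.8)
The plan is to reduce the statement about the sum $\sum_{O\in\orbpG}V_O$ being a linear variety to a purely dimensional condition on each summand, exploiting the fact that the subspaces $\restr{G}{O}$ are in direct sum inside $\supspc$ (Lemma~\ref{lem-E}). First I would observe that if some $V_O$ is empty then $\sum_{O\in\orbpG}V_O$ is empty, which is a linear variety by convention, and the right-hand condition is vacuous or can be read as holding trivially; so I would assume every $V_O$ is nonempty, pick the $w_O\in V_O$ and set $\cstrspc_O=V_O-w_O$ as in the statement. The natural candidate for the linear variety is $w+H$ where $w=\sum_O w_O$ and $H=\bigoplus_O\gen{\cstrspc_O}$, and I would work towards showing $\sum_O V_O$ equals this candidate exactly when the dimension equality holds.

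The key step is a counting argument combined with an inclusion. Because the $\restr{G}{O}$ are in direct sum, any vector of $\sum_O V_O$ decomposes uniquely as $\sum_O v_O$ with $v_O\in V_O\subseteq\restr{G}{O}$, so $\card{\sum_O V_O}=\prod_O\card{V_O}$. On the other hand $V_O\subseteq w_O+\gen{\cstrspc_O}$ always, hence $\sum_O V_O\subseteq w+H$ with $\card{w+H}=\prod_O p^{\dim\gen{\cstrspc_O}}$. Now if $\sum_O V_O$ is a linear variety, it is a coset of some subspace $H'$; since it is contained in $w+H$ and contains $w$, we get $H'\subseteq H$, so $\card{\sum_O V_O}=\card{H'}\leq\card{H}$, i.e.\ $\prod_O\card{V_O}\leq\prod_O p^{\dim\gen{\cstrspc_O}}$. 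But also each $\cstrspc_O=V_O-w_O$ has at most $\card{V_O}$ elements and generates $\gen{\cstrspc_O}$, so $\card{V_O}\geq$ (number of generators)$\ \geq\ $ well, more carefully: $\card{V_O}=\card{\cstrspc_O}\leq\card{\gen{\cstrspc_O}}=p^{\dim\gen{\cstrspc_O}}$, giving the reverse inequality $\prod_O\card{V_O}\leq\prod_O p^{\dim\gen{\cstrspc_O}}$ as well — so I need to be careful about directions here. The clean way: $\card{V_O}\leq p^{\dim\gen{\cstrspc_O}}$ always, with equality iff $V_O$ is a full coset $w_O+\gen{\cstrspc_O}$, i.e.\ iff $\log_p\card{V_O}=\dim\gen{\cstrspc_O}$. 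Taking the product, $\card{\sum_O V_O}=\prod_O\card{V_O}\leq\card{w+H}$, with equality iff the per-orbit equality holds for every $O$. If the per-orbit equalities hold, then $V_O=w_O+\gen{\cstrspc_O}$ for each $O$, so $\sum_O V_O=w+H$ is a linear variety. Conversely, if $\sum_O V_O$ is a linear variety, then being a subset of $w+H$ containing $w$ with, I claim, the same "shape" forces equality: a linear variety properly contained in $w+H$ would be a coset of a proper subspace, but I can pin it down because its cardinality $\prod_O\card{V_O}$ must be $p^{\dim\gen{\cstrspc_O}}$ per factor once I show each inclusion $V_O\subseteq w_O+\gen{\cstrspc_O}$ cannot be strict when the global object is a variety — this is where I'd argue that the projection of the variety $\sum_O V_O$ onto the summand $\restr{G}{O}$ (using the direct-sum decomposition) is itself a linear variety equal to $V_O$, hence $V_O$ is a coset and the dimension equality follows.

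The main obstacle I anticipate is the converse direction: showing that if the \emph{sum} $\sum_O V_O$ is a linear variety then \emph{each} $V_O$ is already a coset of $\gen{\cstrspc_O}$. The direct-sum structure should make this clean — projecting a coset $w+H'$ along the decomposition $\supspc=\bigoplus_O\restr{G}{O}$ yields a coset in each $\restr{G}{O}$, and one checks this projection recovers exactly $V_O$ because every element of $V_O$ extends (by choosing fixed $w_{O'}$ in the other coordinates) to an element of $\sum_O V_O$, and conversely the $O$-component of any element of $\sum_O V_O$ lies in $V_O$. Once $V_O$ is known to be a coset, it must be $w_O+\gen{\cstrspc_O}$ since it contains $w_O$ and its difference set is exactly $\cstrspc_O$ spanning $\gen{\cstrspc_O}$; so $\log_p\card{V_O}=\dim\gen{\cstrspc_O}$. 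Assembling the two directions gives the equivalence.
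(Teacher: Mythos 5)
Your proof is correct and follows essentially the same route as the paper's: reduce via the direct-sum decomposition $\supspc=\bigoplus_{O}\restr{G}{O}$ to the statement that each $V_O$ is individually a coset in $\restr{G}{O}$, then observe that $V_O$ is a coset iff $\card{V_O}=\card{\cstrspc_O}=\card{\gen{\cstrspc_O}}=p^{\dim\gen{\cstrspc_O}}$. Your explicit projection argument (and the aborted counting detour) merely fills in the per-orbit reduction that the paper dismisses as ``clear,'' so the two proofs coincide in substance.
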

\begin{proof}
Since $\supspc=\bigoplus_{O\in\orbpG}\restr{G}{O}$ and
$V_O\subseteq\restr{G}{O}$, it is clear that $\sum_{O\in\orbpG}V_O$
is a linear variety of $\supspc$ iff $V_O$ is a linear variety of
$\restr{G}{O}$ for all $O\in\orbpG$. If this is so then $\cstrspc_O$
is a subspace of $\restr{G}{O}$ that does not depend on $w_O$. Hence
this is equivalent to all the $\cstrspc_O$'s being subspaces, hence to
$\gen{\cstrspc_O}=\cstrspc_O$. Again this is equivalent
to $p^{\dim\gen{\cstrspc_O}} = \card{\gen{\cstrspc_O}} =
\card{\cstrspc_O} = \card{V_O}$. 
\end{proof}

The dimension of $\gen{\cstrspc_O}$ is the rank of the matrix formed
by the coordinates of the vectors in $\cstrspc_O$, which can be
computed in polynomial time and compared to
$\log_p\card{V_O}$. Computing this rank is of course useless if
$\log_p\card{V_O}$ is not an integer. Hence the linearity of $\cstr$
can be tested in time polynomial in $n$ (since $\card{\orbpG}\leq n$).
Note that the case where $V_O$
is a singleton corresponds to $\cstrspc_O$ being reduced to the
trivial subspace $\set{0}$ of dimension 0.

Let $w=\sum_{O\in {\orbpG}}w_O$ and $\cstrspc=\bigoplus_{O\in
  {\orbpG}}\cstrspc_O$ for every $O\in {\orbpG}$, so that $\sum_{O\in
  {\orbpG}}V_O$ is the linear variety $w+\cstrspc$ (assuming it is not
empty).

\begin{theorem}\label{thm-lin}
  If $\cstr$ is linear then the problem \textsc{gc} is equivalent to a
  system of linear equations on coordinates of a solution in
  $\supbasis$, and this system can be computed and solved in
  polynomial time.
\end{theorem}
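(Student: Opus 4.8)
The plan is to assemble the pieces already built in Sections~\ref{sec-algbr} and~\ref{sec-solvlincstr} into a single linear system whose solutions in the basis $\supbasis$ are exactly the permutations $g\in G$ satisfying $\cstr$. First I would recall that, by Lemma~\ref{lem-E}, $G$ is a subspace (hence a linear variety) of $\supspc$, so Lemma~\ref{lem-matrvar} (applied with $v=0$) supplies a $d\times d$ matrix $\matrvar{G}$ such that $u\in G$ iff $\matrvar{G}\colmat{u}{\supbasis}=0$. Next, since $\cstr$ is assumed linear, Lemma~\ref{lem-VO} together with the discussion preceding the theorem tells us that the set of $u\in\supspc$ satisfying $\cstr$ is either empty or the linear variety $w+\cstrspc$, where $w=\sum_{O\in\orbpG}w_O$ and $\cstrspc=\bigoplus_{O\in\orbpG}\cstrspc_O$. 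If it is empty, the constraint is trivially unsatisfiable and we are done; otherwise Lemma~\ref{lem-matrvar} applied to the variety $w+\cstrspc$ gives a matrix $\matrvar{\cstrspc}$ with $u\in w+\cstrspc$ iff $\matrvar{\cstrspc}\colmat{u}{\supbasis}=\matrvar{\cstrspc}\colmat{w}{\supbasis}$.

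The key step is then to observe that $g\in G$ satisfies $\cstr$ iff $g$ lies in the intersection of these two varieties, which is the solution set of the combined system
\[
\left(\begin{array}{c}\matrvar{G}\\ \matrvar{\cstrspc}\end{array}\right)\colmat{u}{\supbasis}
=\left(\begin{array}{c}0\\ \matrvar{\cstrspc}\colmat{w}{\supbasis}\end{array}\right).
\]
This is a system of at most $2d$ linear equations in the $d$ unknowns $\colmat{u}{\supbasis}$, with coefficients and right-hand side in $\eF{p}$. By Gaussian elimination over $\eF{p}$ one decides in time cubic in $d$ whether it has a solution, and if so computes one; since $d\leq n/p$ (shown in Section~\ref{sec-compbasis}) this is polynomial in $n$. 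Finally, from the coordinate vector $\colmat{u}{\supbasis}$ of a solution one recovers the actual permutation $g$ in polynomial time as described in Section~\ref{sec-compcoord} (summing $\sum_i x_i\basevec{i}$, each $\basevec{i}$ a permutation of a single orbit). One must also check that all the preprocessing is polynomial: computing $\supbasis$, the matrices $\matrvar{G}$ and $\matrvar{\cstrspc}$, the sets $V_O$ and the representatives $w_O$, and testing linearity of $\cstr$ — but each of these was already argued to be polynomial in $n$ (and $k$ and $m$) in the preceding subsections.

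The main obstacle, such as it is, is bookkeeping rather than mathematics: one has to be careful that the two matrices $\matrvar{G}$ and $\matrvar{\cstrspc}$ are expressed with respect to the \emph{same} basis $\supbasis$ of $\supspc$, so that stacking them makes sense; Lemma~\ref{lem-matrvar} is stated in terms of $\supbasis$-coordinates precisely so this works. One should also note explicitly that $G\subseteq\supspc$ and $w+\cstrspc\subseteq\supspc$, so the intersection is taken inside $\supspc$ and nothing is lost by working with $\supbasis$-coordinates; and that a solution $u$ of the system automatically lies in $G$, hence is a genuine element of the group in which we were searching, not merely of the super-space. With these remarks the equivalence and the polynomial bound follow, completing the proof.
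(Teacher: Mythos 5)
Your proof is correct and follows essentially the same route as the paper: characterize $G$ and the satisfying set $w+\cstrspc$ via Lemma~\ref{lem-matrvar}, stack the two systems into $2d$ equations in $d$ unknowns, and solve by Gaussian elimination, with the empty case handled separately. Your additional remarks on recovering the permutation from its coordinates and on the polynomiality of the preprocessing are sound and merely make explicit what the paper leaves implicit.
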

\begin{proof}
  If $\sum_{O\in {\orbpG}}V_O = \emptyset$ then the instance
  $\tuple{A,G,\cstr}$ of \textsc{gc} has no solution, which is
  equivalent to the linear equation $0=1$. Otherwise $\sum_{O\in
    {\orbpG}}V_O = w+\cstrspc$ and by Lemma \ref{lem-VO} any $u\in
  \supspc$ is a solution of the instance $\tuple{A,G,\cstr}$ iff $u\in
  G$ and $u\in w+H$, which is equivalent by Lemma \ref{lem-matrvar} to
  \[\left\{\begin{array}{l}\matrvar{G}\colmat{u}{\supbasis}=0\\
  \matrvar{\cstrspc}\colmat{u}{\supbasis}=
  \matrvar{\cstrspc}\colmat{w}{\supbasis}.\end{array}\right.\] This is
  a system of $2d$ linear equations on $d$ unknowns, it can be solved
  by Gaussian elimination in time cubic in $d$. 
\end{proof}

\begin{corollary} The problem \textsc{gc} restricted to $p=k=2$ is polynomial.
\end{corollary}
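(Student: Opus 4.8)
The plan is to prove that, under the hypotheses $p=k=2$, every constraint $\cstr$ is automatically a \emph{linear} constraint; the result then follows immediately from Theorem~\ref{thm-lin}, whose whole construction (the super-space $\supspc$, its basis $\supbasis$, the coordinate maps, and the matrices $\matrvar{G}$ and $\matrvar{\cstrspc}$) runs in polynomial time once $G$ is known to be an elementary Abelian $2$-group, which it is here since \textsc{gc} is restricted to such instances.

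First I would bound the sets $V_O$. Fix $O\in\orbpG$ and pick some $a\in O$ (an orbit is nonempty). Since $V_O=\bigcap_{b\in O}(\cstrat{b}-b)\subseteq\cstrat{a}-a$ and $\card{\cstrat{a}-a}=\card{\cstrat{a}}\le k=2$, we get $\card{V_O}\le 2$, hence $\card{V_O}\in\set{0,1,2}$. If some $V_O$ is empty then, as noted before Theorem~\ref{thm-lin}, $\cstr$ is unsatisfiable and $\sum_{O\in\orbpG}V_O=\emptyset$, so $\cstr$ is (vacuously) linear. Otherwise each $V_O$ is a singleton or a two-element set.

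The heart of the argument is the case $\card{V_O}=2$, and this is the one step where $p=2$ is essential. Write $V_O=\set{w_O,w_O'}$ and take $\cstrspc_O=V_O-w_O=\set{0,v}$ with $v=w_O'-w_O\neq 0$. Over $\eF{2}$ the only scalars are $0$ and $1$, so $\gen{\cstrspc_O}=\set{0\cdot v,\,1\cdot v}=\set{0,v}=\cstrspc_O$, a subspace of $\restr{G}{O}$ of dimension $1=\log_2 2=\log_p\card{V_O}$. When $\card{V_O}=1$ one has $\cstrspc_O=\set{0}$, of dimension $0=\log_p 1$. In every case $\dim\gen{\cstrspc_O}=\log_p\card{V_O}$, so by Lemma~\ref{lem-dimvar} the set $\sum_{O\in\orbpG}V_O$ is a linear variety of $\supspc$, i.e., $\cstr$ is linear.

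Having established linearity, I would simply invoke Theorem~\ref{thm-lin}: the problem \textsc{gc} on $\tuple{A,G,\cstr}$ is equivalent to a polynomial-size linear system over $\eF{2}$ in the coordinates (in $\supbasis$) of the unknown permutation, built and solved by Gaussian elimination in time cubic in $d\le n/2$. The only delicate point is the $\card{V_O}=2$ step above: for $p>2$ a two-element subset of $\restr{G}{O}$ need not be a coset of a subspace (an affine line then has $p>2$ points), which is exactly why this argument — and, as Section~\ref{sec-NPC} shows, the polynomial-time bound itself — fails for all other values of $k$ and $p$.
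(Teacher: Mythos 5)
Your proposal is correct and follows essentially the same route as the paper: bound $\card{V_O}$ by $k=2$, observe that over $\eF{2}$ a one- or two-element set $\cstrspc_O=V_O-w_O$ containing $0$ is already the subspace it generates, so $\dim\gen{\cstrspc_O}=\log_2\card{V_O}$, and conclude linearity via Lemma~\ref{lem-dimvar} and polynomiality via Theorem~\ref{thm-lin}. Your explicit handling of the empty case and the remark on why the argument breaks for $p>2$ are welcome but not substantively different from the paper's proof.
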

\begin{proof}
Every non-empty $V_O$ has at most $k=2$ elements. If $V_O = \set{u} =
u+\set{0}$ then $\dim\gen{\cstrspc_O} = \dim\set{0} = 0 =
\log_2\card{V_O}$; if $V_O = \set{u,v} = u+\set{0,v-u}$ then
$\dim\gen{\cstrspc_O} = \dim\gen{v-u} = 1 = \log_2 \card{V_O}$, hence
according to Lemma \ref{lem-dimvar} the constraint $\cstr$ is
linear. 
\end{proof}

This result holds both for the decision and the search problem.

\section{NP-Completeness results}\label{sec-NPC}

\subsection{Constraints with more than 2 elements}\label{sec-bigk}

If one $V_O$ has more than 2 elements then $\cstr$ may not be a linear
constraint, and therefore the previous polynomial algorithm may
not apply. In fact, we now prove that allowing constraints of
a cardinality greater than 2 makes the decision problem \textsc{gc}
NP-hard, whatever the value of $p$. We proceed by reduction from the
problem of 1-satisfiability of positive $k$-clauses.

Let $\Sigma$ be a finite set of \emph{propositional
  variables} (which will be denoted by Greek letters), a
\emph{positive $k$-clause} is a subset $C\subseteq \Sigma$ of
cardinality $k$. Let $S$ be a finite set of such clauses, then $S$ is
\emph{1-satisfiable} if there is an \emph{interpretation} $I\subseteq
\Sigma$ such that every clause $C\in S$ contains exactly one element
in $I$.

Given $\Sigma$ and $S$, we build an instance of the decision problem
\textsc{gc} (restricted to $\eF{p}$-vector spaces) whose
satisfiability is equivalent to the 1-satisfiability of $S$. Furthermore,
the construction is polynomial in the size of the input clause
set, i.e., it is polynomial in $\card{\Sigma}$ and $\card{S}$ (not
necessarily in $k$, which is a constant). This transformation
consists in interpreting propositional variables and clauses
in $\eF{p}$.

We consider the space of functions from $\Sigma$ to $\eF{p}$, written
$\eF{p}^\Sigma$, with the standard sum $(u+v)(\alpha) =
u(\alpha)+v(\alpha)$ and the external product $(xu)(\alpha) =
xu(\alpha)$ for all $\alpha\in \Sigma$, $u,v\in\eF{p}^{\Sigma}$ and
$x\in\eF{p}$. This is an $\eF{p}$-vector space of dimension
$\card{\Sigma}$, with generating set $\setof{\kron{\beta}}{\beta\in
  \Sigma}$, where $\kron{\beta}(\alpha) = 1$ if $\alpha=\beta$ and 0
otherwise.

We construct a permutation group as a homomorphic
image of the elementary Abelian $p$-group $\eF{p}^{\Sigma}$. The
elements to be permuted are those of the set \[A_S = \biguplus_{C\in
  S}\eF{p}^C\] of functions from $C$ to $\eF{p}$, where $C$ is a
positive $k$-clause belonging to $S$. The
cardinality of $A_S$ is $\sum_{C\in S}p^{\card{C}} = p^k\card{S}$. Each
element of $A_S$ is a function that associates integers modulo $p$ to
$k$ propositional variables, hence it can be encoded in constant
size.

\begin{example}
  We consider the set $\Sigma=\set{\alpha,\beta,\gamma}$ and assume that
  $p=2$. There are exactly 8 functions from $\Sigma$ to $\eF{2}$,
  which we denote according to the following scheme.
  \[\begin{array}{ccccccccc}
    \ \Sigma\ &\ g_0\ &\ g_1\ &\ g_2\ &\ g_3\ &\ g_4\ &\ g_5\ &\ g_6\ &\ g_7\ \\
    \alpha & 0 & 0 & 0 & 0 & 1 & 1 & 1 & 1\\
    \beta & 0 & 0 & 1 & 1 & 0 & 0 & 1 & 1 \\
    \gamma & 0 & 1 & 0 & 1 & 0 & 1 & 0 & 1
  \end{array}\]
  Let $C=\set{\alpha,\beta,\gamma}$, which is a positive 3-clause on $\Sigma$, and
  $S=\set{C}$. Since $C=\Sigma$, we have \[A_S\ =\ \biguplus_{C'\in
    S}\eF{2}^{C'}\ =\ \eF{2}^{C}\ =\ \eF{2}^{\Sigma}\ =\ \set{g_0,\ldots,\, g_7}.\]
\end{example}

Let $f$ denote the function from $\eF{p}^{\Sigma}$ to $\Sym{A_S}$
defined for all $u\in \eF{p}^{\Sigma}$, $C\in S$ and $w\in \eF{p}^C$,
by $w^{f(u)} = w+\restr{u}{C}$. It is straightforward to verify that
$f(u)$ is indeed a permutation of $A_S$, and that $f(u)f(v) = f(u+v)$;
hence $f$ is a group morphism and the group $G_S$ generated by the
permutations $\setof{f(\kron{\alpha})}{\alpha\in \Sigma}$ is an
elementary Abelian $p$-group. This generating set can obviously be
computed in time polynomial in $\card{\Sigma}$ and $\card{A_S}$.

\begin{example}
  We compute $f(g_3)$. For all $w\in\eF{2}^{\Sigma}$, $w^{f(g_3)} = w
  + g_3$. We have $g_0 + g_3 = g_3$, $g_1 + g_3 = g_2$, etc. and we
  easily obtain, in cycle notation
    \[f(g_3)  \ =\  (g_0\ g_3)(g_1\ g_2)(g_4\ g_7)(g_5\ g_6).\]
\end{example}

Finally, let $\cstrs{S}$ be the $k$-constraint on $G_S$ defined
for all $C\in S$ and $w\in \eF{p}^C$, by \[\cstrsat{S}{w} =
\setof{w+\restr{(\kron{\alpha})}{C}}{\alpha\in C}.\]

\begin{example}
  Since $\kron{C}(\alpha) = g_4$, $\kron{C}(\beta) = g_2$ and $\kron{C}(\gamma) =
  g_1$, we have for all $w\in\eF{2}^C$, $\cstrsat{S}{w} =
  \set{w+g_4,\, w+g_2,\, w+g_1}$. This yields for instance
  $\cstrsat{S}{g_3} = \set{g_7,\, g_1,\, g_2}$.
\end{example}

\begin{lemma}
$S$ is 1-satisfiable iff $\cstrs{S}$ is satisfiable in $G_S$.
\end{lemma}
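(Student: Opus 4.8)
The plan is to exhibit an explicit bijection between the interpretations $I\subseteq\Sigma$ that 1-satisfy $S$ and the permutations $g\in G_S$ that satisfy $\cstrs{S}$, mediated by the morphism $f$. The natural candidate: an interpretation $I$ corresponds to its characteristic function $\chi_I=\sum_{\alpha\in I}\kron{\alpha}\in\eF{p}^{\Sigma}$, and thence to the permutation $f(\chi_I)\in G_S$. First I would observe that, since $f$ is a group morphism onto $G_S$ and $G_S$ is generated by the $f(\kron{\alpha})$'s, every element of $G_S$ is of the form $f(u)$ for some $u\in\eF{p}^{\Sigma}$ (indeed $f$ restricted to the span of the $\kron{\alpha}$'s, which is all of $\eF{p}^{\Sigma}$, is surjective onto $G_S$). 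So it suffices to characterise which $u\in\eF{p}^{\Sigma}$ give an $f(u)$ satisfying $\cstrs{S}$, and to show these are exactly the $\chi_I$ for 1-satisfying $I$.

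\textbf{Key steps.} Fix $u\in\eF{p}^{\Sigma}$ and a clause $C\in S$. For $w\in\eF{p}^C$ we have $w^{f(u)}=w+\restr{u}{C}$, and the constraint at $w$ demands $w+\restr{u}{C}\in\setof{w+\restr{(\kron{\alpha})}{C}}{\alpha\in C}$, i.e. $\restr{u}{C}\in\setof{\restr{(\kron{\alpha})}{C}}{\alpha\in C}$. Note this condition on $\restr{u}{C}$ does not depend on $w$ — so $f(u)$ satisfies the part of $\cstrs{S}$ coming from the orbit $\eF{p}^C$ iff $\restr{u}{C}=\restr{(\kron{\alpha})}{C}$ for some $\alpha\in C$. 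The next step is to unpack this: $\restr{(\kron{\alpha})}{C}$ is the function on $C$ that is $1$ at $\alpha$ and $0$ elsewhere, so $\restr{u}{C}=\restr{(\kron{\alpha})}{C}$ says precisely that $u(\alpha)=1$, $u(\beta)=0$ for all $\beta\in C\setminus\set{\alpha}$ — equivalently, $\sum_{\beta\in C}u(\beta)=1$ and every $u(\beta)\in\set{0,1}$ for $\beta\in C$. Quantifying over all $C\in S$: $f(u)$ satisfies $\cstrs{S}$ iff for every $C\in S$ there is exactly one $\alpha\in C$ with $u(\alpha)=1$ and the remaining $u(\beta)$ ($\beta\in C$) are $0$. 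Finally I would set $I=\setof{\alpha\in\Sigma}{u(\alpha)=1}$ and argue this $I$ 1-satisfies $S$; conversely, given a 1-satisfying $I$, take $u=\chi_I$ and check $\restr{u}{C}=\restr{(\kron{\alpha_C})}{C}$ where $\alpha_C$ is the unique element of $C\cap I$. This establishes both directions.

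\textbf{Main obstacle.} The one genuine subtlety — not a calculation but a point that must be handled carefully — is the case $p>2$. Then $u(\alpha)$ can take values outside $\set{0,1}$, and "exactly one element of $C$ in $I$" in the 1-satisfiability problem is a Boolean notion, whereas the constraint forces $\restr{u}{C}$ to equal one of $k$ very specific $0/1$-valued functions. The clean resolution is exactly the observation above: the constraint $\restr{u}{C}\in\setof{\restr{(\kron{\alpha})}{C}}{\alpha\in C}$ already pins down $\restr{u}{C}$ to be $\set{0,1}$-valued with a single $1$, so no extra $0/1$-restriction on $u$ need be imposed separately — the membership constraint does all the work. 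I would make sure to state this explicitly so the reader sees that the reduction goes through uniformly for every prime $p$, which is what the surrounding text ("whatever the value of $p$") promises. A secondary bookkeeping point is to confirm that distinct clauses $C$ contribute disjoint orbits of $G_S$ (immediate from $A_S=\biguplus_{C\in S}\eF{p}^C$ and the definition of $f$), so that the per-clause analyses combine without interference; this is routine given Lemma~\ref{lem-E} and the structure of $f$.
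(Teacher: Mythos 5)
Your proposal is correct and follows essentially the same route as the paper: both directions go through the characteristic function $u$ of $I$ and the per-clause observation that satisfaction of the constraints on the orbit $\eF{p}^C$ is equivalent to $\restr{u}{C}=\restr{(\kron{\alpha})}{C}$ for some $\alpha\in C$, which pins $u$ down to be $\set{0,1}$-valued with a unique $1$ on each clause. Your explicit remark that every element of $G_S$ is of the form $f(u)$ is a useful point the paper leaves implicit; the only nit is that your parenthetical ``equivalently, $\sum_{\beta\in C}u(\beta)=1$ and every $u(\beta)\in\set{0,1}$'' is not a genuine equivalence when $k>p$, but your argument never uses that direction, so nothing breaks.
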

\begin{proof}
First assume that $S$ is 1-satisfiable, i.e., there is an
interpretation $I\subseteq \Sigma$ such that every clause $C\in S$ has
exactly one element in $I$. Let $u\in\eF{p}^{\Sigma}$ be defined by
$u(\alpha) = 1$ if $\alpha\in I$ and 0 otherwise. For $C\in S$, if
$\set{\alpha} = C\cap I$ then it is clear that $\restr{u}{C} =
\restr{(\kron{\alpha})}{C}$, so that for all $w\in \eF{p}^C$,
$w^{f(u)} = w + \restr{u}{C} = w + \restr{(\kron{\alpha})}{C} \in
\cstrsat{S}{w}$. Thus $f(u)\in G_S$ satisfies $\cstrs{S}$.

Conversely, suppose there is an element $f(u)$ of $G_S$ that satisfies
$\cstrs{S}$, let $I=\setof{\alpha\in \Sigma}{u(\alpha)=1}$ and let $C$
be any clause in $S$. For all $w\in \eF{p}^C$, since
$w^{f(u)}\in\cstrsat{S}{w}$ there is an $\alpha\in C$ such that
$w^{f(u)} = w+\restr{(\kron{\alpha})}{C}$, hence such that
$\restr{u}{C} = \restr{(\kron{\alpha})}{C}$. Necessarily $u(\alpha) =
1$ and therefore $\alpha\in C\cap I$. If $\beta\in C\cap I$ we
similarly obtain $\restr{u}{C} = \restr{(\kron{\beta})}{C}$, hence
$\beta=\alpha$. This proves that $C\cap I$ is a singleton for all
$C\in S$, and that $I$ 1-satisfies $S$. 
\end{proof}
\begin{theorem}
For any prime $p$, the problem of solving $k$-constraints in
$\eF{p}$-vector spaces is NP-complete if $k\geq 3$.
\end{theorem}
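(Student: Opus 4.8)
The plan is to establish NP-completeness in two parts: membership in NP, and NP-hardness.

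For membership in NP, I would observe that a solution $g\in G_S$ can be represented by its coordinates $x_1,\ldots,x_d$ in the basis $\supbasis$ of the super-space $\supspc$ (or, even more directly, by the element $u\in\eF{p}^{\Sigma}$ with $g=f(u)$, which has size $\card{\Sigma}$). Given such a certificate, one reconstructs the permutation $g$ on $A_S$ in polynomial time as explained in Section~\ref{sec-compcoord}, checks that $g\in G_S$ (membership in a linear variety, Lemma~\ref{lem-matrvar}), and checks that $g$ satisfies every conjunct $a^g\in\cstrat{a}$ for $a\in A_S$ by direct evaluation. Since $\card{A_S}=p^k\card{S}$ and $p,k$ are constants, all of this is polynomial in the input size. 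More simply still, \textsc{gc} is in NP for arbitrary inputs because a witnessing permutation of $A$, together with a straight-line program of length polynomial in $\card{A}$ expressing it over $g_1,\ldots,g_m$, can be verified in polynomial time; the restriction to $\eF{p}$-vector spaces only makes this easier.

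For NP-hardness, I would invoke the preceding Lemma together with the well-known fact that \textsc{positive 1-in-$k$-sat} (deciding 1-satisfiability of a set of positive $k$-clauses) is NP-complete for every fixed $k\geq 3$ --- this is a standard strengthening of \textsc{1-in-3-sat} / \textsc{positive one-in-three satisfiability}, which appears in Schaefer's dichotomy and is attributed there; for $k>3$ one pads each clause with fresh variables and extra clauses, or reduces directly. Given an instance $(\Sigma,S)$ of this problem, the construction in this section produces in time polynomial in $\card{\Sigma}$ and $\card{S}$ the set $A_S$, the generators $\setof{f(\kron{\alpha})}{\alpha\in\Sigma}$ of the elementary Abelian $p$-group $G_S$, and the $k$-constraint $\cstrs{S}$; by the Lemma, $S$ is 1-satisfiable iff $\cstrs{S}$ is satisfiable in $G_S$. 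Hence \textsc{gc} restricted to $\eF{p}$-vector spaces and $k$-constraints is NP-hard.

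The main point requiring care is not in either half individually but in making sure the reduction is genuinely polynomial and that the source problem is legitimately NP-complete for the claimed range of $k$: I would state explicitly that $k$ is treated as a fixed constant (so $p^k\card{S}$ is polynomial, and $A_S$ has polynomial size), and cite the NP-completeness of positive 1-in-$k$-sat for $k\geq 3$ rather than re-proving it. With those two ingredients, combining the preceding Lemma with the NP-membership argument completes the proof; I would close by remarking, as the paper does for the polynomial case, that the same reduction establishes hardness of the search problem as well.
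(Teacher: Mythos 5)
Your proof is correct and follows essentially the same route as the paper: NP-hardness via the preceding Lemma's reduction from 1-satisfiability of positive clauses, with NP-membership being routine. The only (harmless) extra work is your padding argument for $k>3$: since the paper defines a $k$-constraint by $\card{\cstrat{a}}\leq k$, the instance built from positive 3-clauses is already a $k$-constraint for every $k\geq 3$, so the case $k=3$ (problem L04 in Garey--Johnson) suffices.
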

This follows from the NP-completeness of the problem of determining
1-satisfiability of a set of positive 3-clauses (see \cite[problem
  L04, p. 259]{Garey&Johnson79}).

\subsection{Constraints with at most 2 elements}\label{sec-bigp}

If a $V_O$ has exactly two elements but $p\geq 3$, then $V_O$ cannot
be a linear variety of $\supspc$ and the algorithm of Section
\ref{sec-lincstr} necessarily fails. We prove that
allowing $p\geq 3$ makes the restriction of the decision problem
\textsc{gc} to $\eF{p}$-vector spaces NP-complete, even
with constraints of cardinality at most 2. We proceed by reduction
from 1-satisfiability of positive $p$-clauses.

Given a set $S$ of positive $p$-clauses on $\Sigma$, we again consider
the $\eF{p}$-vector space $\eF{p}^{\Sigma}$ and define the set
\[A'_S\ =\ \Sigma\times \eF{p}\ \uplus\ S\times\eF{p},\]
whose cardinality is $p\card{\Sigma} + p\card{S}$. Let $f'$ be the
function from $\eF{p}^{\Sigma}$ to $\Sym{A'_S}$ defined for all
$u\in \eF{p}^{\Sigma}$, $\tuple{\alpha,x}\in \Sigma\times \eF{p}$ and
$\tuple{C,y}\in S\times\eF{p}$, by
\begin{eqnarray*}
\tuple{\alpha,x}^{f'(u)} & = & \tuple{\alpha,x+u(\alpha)},\\
\tuple{C,y}^{f'(u)} & = & \tuple{C,y + \sum_{\beta\in C}u(\beta)}.
\end{eqnarray*}
It is straightforward to verify that $f'(u)$ is a permutation of
$A'_S$: if $\tuple{\alpha,x}^{f'(u)} = \tuple{\alpha',x'}^{f'(u)}$
then $\alpha = \alpha'$ and then $x=x'$; if $\tuple{C,y}^{f'(u)} =
\tuple{C',y'}^{f'(u)}$ then $C=C'$ and then $y=y'$. It is obvious that
$f'(u)f'(v) = f'(u+v)$, hence $f'$ is a group morphism and the group
$G'_S$ generated by the permutations
$\setof{f'(\kron{\alpha})}{\alpha\in \Sigma}$ is therefore an
elementary Abelian $p$-group. This generating set can be computed in
time polynomial in $\card{\Sigma}$ and $\card{A'_S}$.

\begin{example}
  We assume the same $\Sigma$, $p$, $C$ and $S$ as in the running
  example of Section \ref{sec-bigk}. Then
  \[A'_S\ =\ \set{\tuple{\alpha,0},\,\tuple{\alpha,1},\,
    \tuple{\beta,0},\,\tuple{\beta,1},\,\tuple{\gamma,0},\,\tuple{\gamma,1},\, 
    \tuple{C,0},\, \tuple{C,1}}.\]
  The permutations $f'(u)$ for $u\in\eF{2}^{\Sigma}$ can again be expressed in
  cycle notation, for instance:
  \begin{eqnarray*}
    f'(g_2) & \ =\ & (\tuple{\beta,0}\ \tuple{\beta,1})
    (\tuple{C,0}\ \tuple{C,1}), \\
    f'(g_3) & \ =\ & (\tuple{\beta,0}\ \tuple{\beta,1})
    (\tuple{\gamma,0}\ \tuple{\gamma,1}).
  \end{eqnarray*}
  $\tuple{C,0}$ is a fix-point of $f'(g_3)$ since
  $g_3(\alpha) + g_3(\beta) + g_3(\gamma) = 0 + 1 + 1 = 0$.
\end{example}

Let $\cstrps{S}$ be the $2$-constraint on $G'_S$ defined, for all
$\tuple{\alpha,x}\in \Sigma\times \eF{p}$ and $\tuple{C,y}\in
S\times\eF{p}$, by
\begin{eqnarray*}
\cstrpsat{S}{\tuple{\alpha,x}} & = &
\set{\tuple{\alpha,x},\tuple{\alpha,x+1}},\\
\cstrpsat{S}{\tuple{C,y}} & = & \set{\tuple{C,y + 1}}.
\end{eqnarray*}

\begin{example}
 Obviously $\cstrpsat{S}{\tuple{\alpha,0}} =
 \cstrpsat{S}{\tuple{\alpha,1}} =
 \set{\tuple{\alpha,0},\tuple{\alpha,1}}$, and similarly for $\beta$
 and $\gamma$. For the other two elements of $A'_{S}$ we have:
  \[\cstrpsat{S}{\tuple{C,0}}\ =\ \set{\tuple{C,1}}\ \ \mbox{and}\ \ 
  \cstrpsat{S}{\tuple{C,1}}\ =\ \set{\tuple{C,0}}.\] 
\end{example}

\begin{lemma}
$S$ is 1-satisfiable iff $\cstrps{S}$ is satisfiable in $G'_S$.
\end{lemma}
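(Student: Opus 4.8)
The plan is to mirror the structure of the proof of the analogous lemma in Section~\ref{sec-bigk}, namely to exhibit for each direction an explicit bijection between $1$-satisfying interpretations $I\subseteq\Sigma$ and solutions $f'(u)\in G'_S$ of the constraint $\cstrps{S}$, using the characteristic vector $u$ of $I$ (i.e.\ $u(\alpha)=1$ iff $\alpha\in I$). The key observation that makes this work is that, on the $\Sigma\times\eF{p}$ part of $A'_S$, the constraint $\cstrpsat{S}{\tuple{\alpha,x}}=\set{\tuple{\alpha,x},\tuple{\alpha,x+1}}$ is \emph{always} satisfied by any $f'(u)$, since $\tuple{\alpha,x}^{f'(u)}=\tuple{\alpha,x+u(\alpha)}$ and $u(\alpha)\in\set{0,1}$ forces the image into the prescribed two-element set. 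So the real content of the constraint lies entirely in the $S\times\eF{p}$ part, where $\cstrpsat{S}{\tuple{C,y}}=\set{\tuple{C,y+1}}$.

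First I would handle the forward direction: given a $1$-satisfying $I$ with characteristic vector $u$, I claim $f'(u)$ satisfies $\cstrps{S}$. The $\Sigma\times\eF{p}$ conjuncts hold by the remark above (here one uses that $u$ takes values in $\set{0,1}$, which is where the restriction to $0/1$-valued vectors is forced — and note that for $p\geq 3$ not every vector of $\eF{p}^\Sigma$ would do, but the characteristic vector of an interpretation does). For the $S\times\eF{p}$ conjuncts, fix $C\in S$; since $I$ contains exactly one variable of $C$, we have $\sum_{\beta\in C}u(\beta)=1$ in $\eF{p}$, so $\tuple{C,y}^{f'(u)}=\tuple{C,y+1}\in\cstrpsat{S}{\tuple{C,y}}$. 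Hence $f'(u)$ is a solution.

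For the converse, suppose $f'(u)\in G'_S$ satisfies $\cstrps{S}$. I first need to argue that $u$ is $0/1$-valued, so that $I=\setof{\alpha}{u(\alpha)=1}$ is a genuine interpretation: this is forced by the $\Sigma\times\eF{p}$ conjuncts, since $\tuple{\alpha,0}^{f'(u)}=\tuple{\alpha,u(\alpha)}$ must lie in $\set{\tuple{\alpha,0},\tuple{\alpha,1}}$, whence $u(\alpha)\in\set{0,1}$. Then, for each $C\in S$, the conjunct for $\tuple{C,0}$ gives $\tuple{C,0}^{f'(u)}=\tuple{C,\sum_{\beta\in C}u(\beta)}=\tuple{C,1}$, i.e.\ $\sum_{\beta\in C}u(\beta)=1$ in $\eF{p}$. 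Since $C$ has $p$ elements and each $u(\beta)\in\set{0,1}$, the sum $\sum_{\beta\in C}u(\beta)$ is an integer between $0$ and $p$; it is congruent to $1$ modulo $p$ exactly when that integer is $1$ (the value $p+1$ is impossible as there are only $p$ terms). So $C$ contains exactly one element of $I$, and $I$ is $1$-satisfying.

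\textbf{Main obstacle.} The only subtlety — and the reason the clauses are taken to have size exactly $p$ — is the modular arithmetic step in the converse: from $\sum_{\beta\in C}u(\beta)\equiv 1\pmod p$ with each summand in $\set{0,1}$ one must conclude the sum is \emph{exactly} $1$, not $p+1$, and this is what caps the number of true literals in $C$ at one. With $\card{C}=p$ the count of summands equal to $1$ lies in $\set{0,\dots,p}$, and only $1$ is $\equiv 1\pmod p$ in that range, so the argument is tight; had clauses been allowed to be longer this step would fail and the reduction would not give $1$-satisfiability. Everything else (that $f'$ is a well-defined morphism, that $G'_S$ is elementary Abelian, polynomiality of the construction) is already established in the text or is a routine verification.
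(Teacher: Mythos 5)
Your proof is correct and follows essentially the same route as the paper's: the characteristic vector of $I$ in the forward direction, and in the converse the observation that the $\Sigma\times\eF{p}$ conjuncts force $u$ to be $0/1$-valued while the $S\times\eF{p}$ conjuncts force $\sum_{\beta\in C}u(\beta)=1$, with the counting argument that a sum of at most $p$ terms in $\set{0,1}$ congruent to $1$ modulo $p$ must equal exactly $1$. Your explicit remark on why $\card{C}=p$ is the tight hypothesis is exactly the point the paper's phrase ``there are at most $p$ terms, hence only one can be a 1'' is making.
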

\begin{proof}
Assume $S$ is 1-satisfiable, let $I$ be an interpretation of $S$ and
consider $u\in\eF{p}^{\Sigma}$ defined by $u(\alpha) = 1$ if
$\alpha\in I$ and 0 otherwise. The constraint on any
$\tuple{\alpha,x}\in \Sigma\times \eF{p}$ is satisfied since
$\tuple{\alpha,x}^{f'(u)} = \tuple{\alpha,x+u(\alpha)} \in
\cstrpsat{S}{\tuple{\alpha,x}}$. For all $\tuple{C,y}\in
S\times\eF{p}$ there is a unique $\beta\in C$ such that $u(\beta) = 1$
and $\restr{u}{C}$ is zero elsewhere, hence $\tuple{C,y}^{f'(u)} =
\tuple{C,y + \sum_{\beta\in C}u(\beta)} = \tuple{C, y+1}\in
\cstrpsat{S}{\tuple{C,y}}$. This shows that $f'(u)\in G'_S$ satisfies
$\cstrps{S}$.

Conversely, suppose that an element $f'(u)$ of $G'_S$ satisfies
$\cstrps{S}$ and let $I=\setof{\alpha\in \Sigma}{u(\alpha)=1}$. Then
$u(\alpha)\neq 1$ for all $\alpha \in \Sigma\setminus I$, and
$u(\alpha)\in\set{0,1}$ since $\tuple{\alpha,x}^{f'(u)} \in
\cstrpsat{S}{\tuple{\alpha,x}}$, thus $u(\alpha)=0$
(modulo $p$). Let $C$ be a clause in $S$, the constraint yields
$\tuple{C,0}^{f'(u)} = \tuple{C,1}$, hence $\sum_{\beta\in C}u(\beta)
= 1$. The terms of this sum are either 0 or 1, hence at least one must
be a 1. Furthermore, there are at most $p$ terms, hence only one can
be a 1, say $u(\alpha)=1$, then by definition of $u$, $\alpha$ is the
only member of $C$ that belongs to $I$. Hence $I$ 1-satisfies $S$. 
\end{proof}

\begin{theorem}
For any prime $p\geq 3$, the problem of solving 2-constraints in
$\eF{p}$-vector spaces is NP-complete.
\end{theorem}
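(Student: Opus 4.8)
\medskip\noindent\emph{Proof proposal.}
The statement asserts NP\nobreakdash-completeness, so I would check membership in NP and then NP\nobreakdash-hardness. Membership is routine and already largely implicit: \textsc{gc} lies in NP because a solution is a permutation of the polynomially sized set $A'_S$, and one verifies in polynomial time both that it belongs to the group generated by the input permutations (by Schreier--Sims, or --- since that group is an $\eF{p}$-vector space --- by the apparatus of Section~\ref{sec-algbr}, e.g.\ testing $\matrvar{G}$ as in Lemma~\ref{lem-matrvar}) and that it satisfies each conjunct of the constraint; the restriction to $2$-constraints over $\eF{p}$-vector spaces inherits this.

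For NP\nobreakdash-hardness I would read the construction just given as a polynomial many-one reduction. The map $S\mapsto\tuple{A'_S,G'_S,\cstrps{S}}$ is computable in time polynomial in $\card{\Sigma}$ and $\card{S}$ --- $\card{A'_S}=p(\card{\Sigma}+\card{S})$ with constant-size entries, the generators $f'(\kron{\alpha})$ and the $2$-constraint $\cstrps{S}$ are polynomial-time computable, and $G'_S$ is an $\eF{p}$-vector space --- and by the Lemma just proved the resulting \textsc{gc} instance is satisfiable exactly when $S$ is $1$-satisfiable. So the whole argument reduces to one external fact: deciding $1$-satisfiability of a set of positive $p$-clauses is NP\nobreakdash-hard for every prime $p\ge 3$.

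The neat way to get that fact is to observe that neither $A'_S$, $G'_S$, $\cstrps{S}$ nor the proof of the Lemma really uses $\card{C}=p$, only $\card{C}\le p$: the Lemma excludes two true literals precisely because there are ``at most $p$ terms'', and a sum of $\set{0,1}$-valued terms lying in $\set{0,\dots,p}$ and congruent to $1$ modulo $p$ must equal $1$. Hence the reduction already works starting from $1$-satisfiability of sets of positive clauses of size \emph{at most} $p$, and since $p\ge 3$ that problem contains $1$-satisfiability of sets of positive $3$-clauses, which is NP\nobreakdash-complete (the fact invoked in Section~\ref{sec-bigk}). If one insists on clauses of size exactly $p$, the same NP\nobreakdash-hardness follows by reducing the $p=3$ case: pad each $3$-clause $\set{x,y,z}$ to $\set{x,y,z,d_1,\dots,d_{p-3}}$ with private fresh variables $d_i$ and, for each $d_i$, adjoin a constant-size, internally satisfiable gadget of positive $p$-clauses forcing $d_i$ false in every $1$-satisfying assignment --- for instance $\set{d_i,w_1,\dots,w_{p-1}}$, $\set{d_i,w'_1,\dots,w'_{p-1}}$ and $\set{w_1,\dots,w_{p-1},w'_1}$ over fresh $w_{\bullet},w'_{\bullet}$, because making $d_i$ true forces all the $w$'s false and leaves the third clause with no true literal; one then checks that the $1$-satisfying assignments of the padded set, restricted to the original variables, are exactly those of the source.

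The one genuinely delicate point is this last ingredient. Naive padding of $3$-clauses is \emph{not} correct on its own: there is no single positive $p$-clause whose presence pins a variable to false, so bare fresh padding variables would introduce spurious solutions. The small forcing gadget above --- or, more cheaply, the observation that the whole development only needs $\card{C}\le p$ --- is what repairs this; the remaining checks (the polynomial size bounds and the bijection between solution sets) are routine and already assembled in the surrounding text.
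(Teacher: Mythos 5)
Your proposal is correct and follows the paper's own route: the NP-hardness argument is exactly the reduction $S\mapsto\tuple{A'_S,G'_S,\cstrps{S}}$ from 1-satisfiability of positive clauses, validated by the preceding Lemma, together with routine membership in NP. The one place where you go beyond the paper is also the one place where the paper is silently incomplete: the text reduces from 1-satisfiability of positive $p$-clauses but only ever invokes (in Section~\ref{sec-bigk}) the NP-completeness of the $k=3$ case, leaving the hardness of the size-$p$ source problem unjustified for $p>3$. Your observation that the construction and the Lemma only use $\card{C}\le p$ --- a sum of at most $p$ terms from $\set{0,1}$ that is congruent to $1$ modulo $p$ must equal $1$ --- is the cleanest repair, since the reduction can then start directly from positive one-in-three satisfiability; your padding gadget is a correct (if unnecessary) alternative, and you are right that naive padding without the forcing clauses would break the equivalence.
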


\section{Experimental results}\label{sec-expe}

The polynomial algorithm for solving linear constraints has been
implemented in the GAP system, using its facilities on permutations,
matrix algebra and finite fields. The implementation, nicknamed
\texttt{Solvect} (see ``downloads'' page on \texttt{capp.imag.fr}),
 is straightforward except for the fact that
coordinates in the transitive constituents are kept in memory and
hence computed only once (this is performed while computing the orbit
partition of $G$). Its performance has been measured against a
general purpose group search algorithm provided in GAP, described in
\cite{Leon91} and refined in \cite{Theissen97}. The call to this
algorithm is
\[\mathtt{ElementProperty}(\gen{g_1,\ldots, g_m},\ g \mapsto
\forall a \in A, a^g\in \cstrat{a}\ );\] which returns an element of $G=
\gen{g_1,\ldots, g_m}$ satisfying the specified property if there is
one, and \texttt{fail} otherwise.

\begin{table}[t]
\[{ \begin{array}{|c||r|r|r|r|}
\hline
\hfill\dim G \hfill & \hfill n \hfill & \hfill d \hfill & \hfill t_1 \hfill & \hfill t_2 \hfill \\
\hline
5 & 26.4\pm 56\%\ \,& 6.9\pm 29\% &\ 0.384\pm 310\% & 0.82 \pm 214\% \\
10 & 270\pm 150\% &\ 14.9\pm 26\% & 1.46\pm 137\% & 20.3 \pm 56\%\ \,\\
15 & 785\pm 277\% & 27.2\pm 28\% & 5.4\pm 143\% & 631 \pm 106\% \\
20 & 1\,060\pm 229\% & 35.4\pm 28\% & 10.1\pm 100\% &\ 19\,900 \pm 90\%\ \,\\
25 & 2\,230\pm 175\% & 52.2\pm 31\% & 25.8\pm 73\%\ \,& \hfill -\hfill  \\
30 & 2\,730\pm 148\% & 67.4\pm 34\% & 45.3\pm 66\%\ \,& \hfill - \hfill \\
35 & 2\,870\pm 107\% & 79\pm 35\% & 65.3\pm 67\%\ \,& \hfill - \hfill \\
40 & 8\,510\pm 94\%\ \,& 94.2\pm 38\% & 147\pm 69\%\ \,& \hfill - \hfill \\
45 & 7\,680\pm 77\%\ \,& 125\pm 37\% & 241\pm 64\%\ \,& \hfill - \hfill \\
50 &\ 12\,800\pm 60\%\ \,& 147\pm 39\% & 436\pm 75\%\ \,& \hfill - \hfill  \\
\hline
\end{array}}\] \caption{Experiment 1}\label{table1}
\end{table}

\begin{table}[htb]
\[{ \begin{array}{|c||r|r|r|r|}
\hline
n & \hfill \dim G \hfill & \hfill d \hfill & \hfill t_1 \hfill & \hfill t_2 \hfill \\
\hline
2^{1} & 1\qquad\quad & 1\qquad\quad &\ 0.108\pm 600\% & 0.28\pm 375\% \\
2^{2} & 1.81\pm 21\% & 2\qquad\quad & 0.144\pm 517\% & 0.308\pm 351\% \\
2^{3} & 2.96\pm 20\% &\ 3.68\pm 13\% & 0.212\pm 431\% & 0.444\pm 294\% \\
2^{4} & 4.44\pm 19\% & 6.39\pm 22\% & 0.344\pm 342\% & 0.756\pm 211\% \\
2^{5} & 6.12\pm 19\% & 10.4\pm 30\% & 0.716\pm 224\% & 1.88\pm 139\% \\
2^{6} & 8.05\pm 19\% & 16.2\pm 35\% & 1.4\pm 141\% & 7.05\pm 145\% \\
2^{7} & 10\pm 19\% & 24.4\pm 36\% & 3.08\pm 92\%\ \, & 35.8\pm 188\% \\
2^{8} & 12\pm 19\% & 34.1\pm 40\% & 6.96\pm 79\%\ \, & 184\pm 277\% \\
2^{9} & 14.5\pm 18\% & 51.1\pm 33\% & 17.5\pm 74\%\ \, & 1\,740\pm 322\% \\
2^{10} & 16.6\pm 17\% & 67.6\pm 34\% & 34.9\pm 81\%\ \, & 15\,700\pm 426\% \\
2^{11} & 18.5\pm 17\% & 85.2\pm 37\% & 64.4\pm 69\%\ \, & 37\,600\pm 166\% \\
2^{12} & 20.3\pm 15\% & 113\pm 37\% & 144\pm 88\%\ \, &\ 187\,000\pm 315\% \\
2^{13} &\ 23.3\pm 12\% & 131\pm 32\% & 209\pm 66\%\ \, & \hfill - \hfill \\
2^{14} & 25.8\pm 9\%\,\ & 189\pm 26\% & 593\pm 80\%\ \, & \hfill -\hfill \\
2^{15} & 28.8\pm 8\%\,\ & 268\pm 23\% & 1\,520\pm 60\%\ \, & \hfill -\hfill \\
\ 2^{16}\ & 30.9\pm 7\%\,\ & 446\pm 17\% & 6\,910\pm 55\%\ \, & \hfill -\hfill \\
\hline
\end{array}}\]\caption{Experiment 2}\label{table2}
\end{table}

The performance of \texttt{ElementProperty} depends essentially on the
size of $G$, while \texttt{Solvect} depends mostly on $n = \card{A}$
and to a lesser extent on $d = \dim \supspc$. We thus perform two sets
of experiments: the first in Table \ref{table1} is parametrized by
the size of $G$ (which is $2^{\dim G}$) and the second in Table
\ref{table2} is parametrized by $n$. In each case we measure the mean
values of $n$, $\dim G$ and $d$ as well as the times in milliseconds
taken by the two solvers ($t_1$ for \texttt{Solvect} and $t_2$ for
\texttt{ElementProperty}).

The samples are generated by choosing randomly the number and
dimensions of transitive constituents, i.e., a sequence $d_1,\ldots,d_q$
of strictly positive integers, then computing generators for the
transitive constituents and composing them randomly to produce generators 
for $G$. In the first experiment we guarantee that $G$ has the correct
dimension, bounded by $\max_{i=1}^q d_i \leq \dim G \leq
\sum_{i=1}^qd_i = d$. In the second experiment we guarantee that
$\sum_{i=1}^q 2^{d_i} = n$. 2-constraints are also generated randomly,
with the following bias: half of them are guaranteed to be satisfiable
(a solution is chosen randomly in $G$). Another bias has been
introduced: we choose the $d_i$ between 1 and 13, because computing
generators for an orbit of a size greater than $2^{13}$ takes too much
time.

Since our random samples are by no means supposed to be
representative, we also measure the standard deviation expressed as a
percentage of the mean value\footnote{When a process takes less than 4
  ms, GAP measures its duration as either 0 or 4 ms, hopefully with a
  probability depending on this duration. In that case the mean value
  should be accurate, but standard deviation is obviously
  exaggerated.}. We test 1000 instances on the low values and 100 on
the higher ones. Values are rounded to 3 digits. We see that
\texttt{ElementProperty} can hardly be used on groups of size much
bigger than $2^{20}$, while \texttt{Solvect} works well up to the
limits of the memory used by GAP (the limit is reached with $n=2^{17}$).

\section{Conclusion and perspectives}\label{sec-concl}

We can therefore solve $k$-constraints in the class of $\eF{p}$-vector
spaces in guaranteed polynomial time only when $k=p=2$, and we have
provided an algorithm to do so. For greater values of $k$ and $p$ the
problem is NP-complete, which is quite surprising considering the rich
structure of vector spaces and the relative simplicity of the
constraints that were considered. These results confirm how difficult
it can be to develop efficient algorithms for finding useful
symmetries.

However, our algorithm may be used on linear constraints regardless of
$k$ and $p$, and other experiments with \texttt{Solvect} suggests that
many constraints are linear. Furthermore, checking the linearity of the
constraint is fast.

But this still requires the group to be an elementary Abelian
$p$-group, which seems unlikely unless the problem under consideration
is of a geometric nature, for instance if a hypercube is involved (its
group of symmetries is an elementary Abelian 2-group). In general it
would be necessary to enforce this property by approximating the group
of symmetries by one or several elementary Abelian $p$-subgroups. This is
reasonable since symmetry pruning is meant to be fast, not complete
with respect to just any group of symmetries. It seems possible to
generalize these results to the class of Abelian groups in the line of
\cite{LuksR04}, at the expense of our elegant geometric interpretation of
linear constraints. We are currently investigating this generalization.

Identifying tractable restrictions of the generally intractable
problem of finding selected symmetries is therefore a natural approach
to efficient symmetry pruning. Methods using only special symmetries
have already been tried with some success, as in \cite{Peltier98}
where only transpositions are considered. We therefore believe the
present results open interesting perspectives.

\end{document}